\documentclass{article}

\PassOptionsToPackage{numbers,sort&compress}{natbib}
\usepackage[preprint]{neurips_2026}

\usepackage{placeins}
\usepackage[utf8]{inputenc}
\usepackage[T1]{fontenc}
\usepackage{hyperref}
\usepackage{url}
\usepackage{wrapfig}
\usepackage{microtype}
\usepackage{graphicx}
\usepackage{subcaption}
\usepackage{booktabs}
\usepackage{nicefrac}
\usepackage{amsmath}
\usepackage{amsfonts}
\usepackage{amssymb}
\usepackage{mathtools}
\usepackage{amsthm}
\usepackage{xcolor}
\usepackage{multirow}
\usepackage{makecell}
\usepackage[most]{tcolorbox}
\tcbuselibrary{breakable}
\usepackage{algorithm}
\usepackage{algorithmic}
\usepackage{enumitem}
\usepackage[capitalize,noabbrev]{cleveref}
\usepackage[textsize=tiny]{todonotes}
\usepackage[table]{xcolor}
\newcommand{\best}[1]{\cellcolor{blue!10}\textbf{#1}}
\FloatBarrier

\setcitestyle{numbers,square,comma,sort&compress}
\let\cite\citep

\definecolor{MyPurple}{HTML}{67849B}
\definecolor{MyLightBlue}{HTML}{F4F7FC}
\tcbset{
  myexample/.style={
    colback=white,
    colframe=MyPurple,
    colbacktitle=MyLightBlue,
    coltitle=black!80,
    fonttitle=\bfseries,
    arc=4mm,
    boxrule=1pt,
    left=3mm, right=3mm, top=1.5mm, bottom=1.5mm,
    title={#1}
  }
}

\definecolor{caseblue}{HTML}{4E79A7}
\tcbset{
  casestudy/.style={
    colback=white,
    colframe=black!60,
    colbacktitle=caseblue,
    coltitle=white,
    fonttitle=\bfseries,
    arc=4mm,
    boxrule=1pt,
    left=3mm, right=3mm, top=1.5mm, bottom=1.5mm,
    breakable,
    title={#1}
  }
}

\newcommand{\Vcal}{\mathcal{V}}
\newcommand{\thetahat}{\hat{\theta}}

\theoremstyle{plain}
\newtheorem{theorem}{Theorem}[section]

\newtheorem{lemma}[theorem]{Lemma}

\theoremstyle{definition}

\newtheorem{assumption}[theorem]{Assumption}
\theoremstyle{remark}

\newcommand{\red}[1]{\textcolor{red}{#1}}

\definecolor{cGreen}{RGB}{0,150,0}

\definecolor{brown}{RGB}{139,64,0}

\definecolor{easy-top}{RGB}{160,208,208}
\definecolor{difficult-top}{RGB}{213,170,190}

\hypersetup{
  pdftitle={Train at the Moving Edge: \red{Rollout-Efficient} RL for Large Reasoning Models},
  pdfkeywords={Machine Learning, NeurIPS, Reinforcement Learning, Large Reasoning Models, Prompt Selection}
}

\title{Train at the Moving Edge: Efficient RL for Large Reasoning Models via Rollout Selection}

\author{%
\begin{minipage}{0.98\textwidth}
\centering
\normalfont\small
{\bfseries
Jiahao Wu\textsuperscript{1,2,*} \quad
Ning Lu\textsuperscript{1,3,*} \quad
Shengcai Liu\textsuperscript{1,\textdagger} \quad
Kun Wang\textsuperscript{4,2} \quad
Yanting Yang\textsuperscript{5}
}\\[-0.05em]
{\bfseries
Baijiong Lin\textsuperscript{6} \quad
Chen Jason Zhang\textsuperscript{2} \quad
Qing Li\textsuperscript{2} \quad
Ke Tang\textsuperscript{1}
}\\[0.4em]
{\scriptsize
\textsuperscript{1}Southern University of Science and Technology \quad
\textsuperscript{2}The Hong Kong Polytechnic University
}\\[-0.1em]
{\scriptsize
\textsuperscript{3}The Hong Kong University of Science and Technology \quad
\textsuperscript{4}Nanyang Technological University
}\\[-0.1em]
{\scriptsize
\textsuperscript{5}Rutgers University \quad
\textsuperscript{6}The Hong Kong University of Science and Technology (Guangzhou)
}\\[0.2em]
{\scriptsize
\textsuperscript{*}Equal contribution. \quad
\textsuperscript{\textdagger}Correspondence to: \texttt{liusc3@sustech.edu.cn}.
}
\end{minipage}
}
\begin{document}

\maketitle
\suppressfloats[t]

\begin{abstract}
Reinforcement learning (RL) has become essential for post-training large language models (LLMs) in reasoning tasks. While scaling rollouts can stabilize training and enhance performance, it introduces substantial computational overhead. In algorithms like GRPO, multiple rollouts per prompt incur prohibitive costs, as a large portion of prompts provide negligible gradients and are thus of low utility. This raises a key question: \textit{how to identify high-utility prompts before an expensive rollout?} 
Our experimental analysis reveals that sample utility is non-uniform and dynamic: the strongest learning signals concentrate at the ``learning edge'', the intersection of intermediate difficulty and high response entropy, which shifts throughout training. Motivated by this observation, we propose HIVE, a history-informed and online-verified prompt selection framework for data-efficient RL training. HIVE first uses historical reward statistics and response entropy as a cheap prior to filter candidate prompts, and then employs prompt entropy as a real-time proxy to prune instances with stale utility. Across multiple reasoning benchmarks and base models, HIVE maintains  reasoning accuracy with substantial rollout cost reduction, achieving up to 2.3× total training speedup. 

% \red{1. re-examine all the equations to make them more professional+2. add more experimental details for preliminary
% studies and main experimetns. 4. examine references duplication, fake and missing. 4. check the sentences to keep them
% within 2.5 lines. 5. the experimental analysis and whether missing the figs or tables.}
\end{abstract}

\begin{figure}[t]
    \centering
    \includegraphics[width=0.98\linewidth]{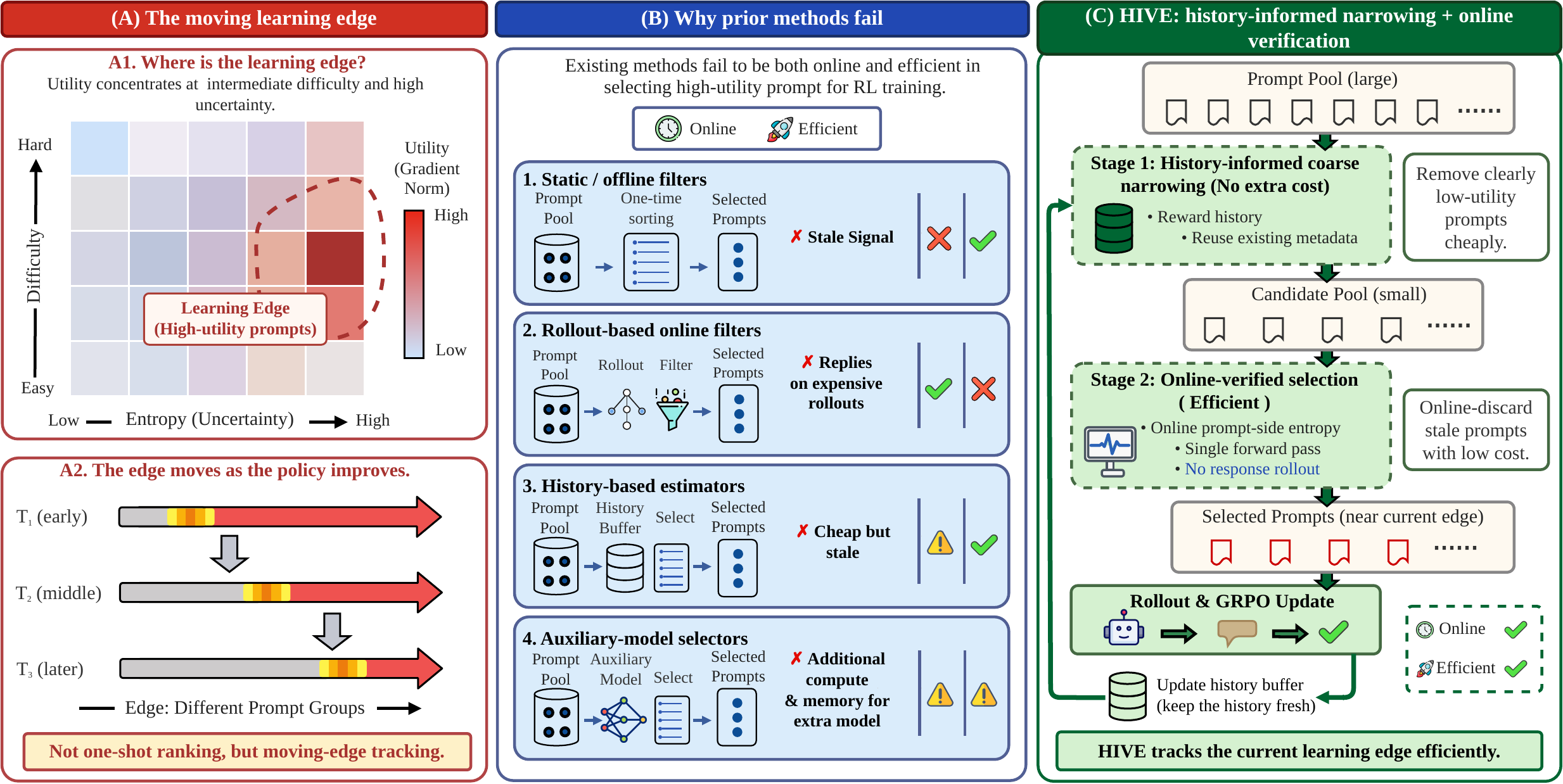}
    \caption{\textbf{RLVR prompt selection is a moving-edge tracking problem.} (A) Utility concentrates at 
    intermediate difficulty and high uncertainty, and the learning edge is dynamic. 
    (B) Prior methods face limitations in efficiently selecting online high-utility prompts. (C) HIVE combines history-informed coarse narrowing with low-cost online 
    verification to track the current learning edge.}
    \label{fig:intro-compare-motivate}
    \vskip -0.2in
\end{figure}
% \red{1. re-write this caption 2.
%     delete precise. 3. redraw figure 3(d) to correspond to the A2 in figure 1. 4. shoud not be like curriculum, 
%     instead the utility of prompts across different step should be varying.}

\section{Introduction}
\label{sec:introduction}
Reinforcement Learning (RL) has emerged as a prevalent paradigm for fine-tuning large language models (LLMs), particularly for enhancing complex mathematical and logical reasoning capabilities~\cite{guo2025deepseek,yu2025dapo,tomihari2026learningdynamicsrlposttraining}. While advanced RL algorithms such as group relative policy optimization (GRPO)~\cite{shao2024deepseekmath} rely on rollout scaling to stabilize training and enhance model performance, they introduce a severe computational overhead~\cite{xu2025not,yu2025dapo,li2026stalefeedbackcoevolvingcritics}. In the training of those RL algorithms, a large portion of computational resources is wasted on generating rollouts for low-utility prompts that are either too trivial or too intractable for the current model. This results in vanishing gradients and wasted resources, providing no informative learning signal for model updates~\cite{zheng2025act,yu2025dapo,noukhovitch2025fasterICLR,verl2025}. Therefore, we investigate the following research question in this paper: \textit{how to identify prompts that are likely to produce useful gradients before rollout?}

Existing methods face limitations in efficiently selecting {high-utility} prompts in an online manner. Static or offline filters rank prompts before training using fixed heuristics, making them inexpensive but unable to adapt to the changing model during RL 
training~\cite{Wang2025ReinforcementLF,Li2025LIMRLI,ye2025limo}. Rollout-based online filters, such as dynamic sampling, better capture the current utility of each prompt by checking reward variance from newly generated responses~\cite{yu2025dapo,xu2025not,Chen2025LSPO,Ye2025PROF}. However, they already incur substantial rollout cost. History-based estimators such as GRESO~\cite{zheng2025act} avoid additional 
rollout cost by estimating prompt utility from past reward statistics, but those metadata becomes stale as the policy 
updates~\cite{zheng2025act,sun2025dots,li2026stalefeedbackcoevolvingcritics}. Auxiliary-model selectors, such as PCL~\cite{Gao2025PCL}, attach external or jointly trained predictors to estimate prompt difficulty or utility, introducing extra compute and memory during selection. 

To identify what makes a prompt useful during RL training, we first empirically analyze GRPO training dynamics. Among prompts with non-zero gradients, those with high response-side entropy produce greater learning progress under the same rollout budget (Figure~\ref{fig:intro-high-entropy}). More broadly, utility, measured by the length-normalized gradient norm, concentrates at the intersection of intermediate difficulty and high response-side entropy (Figure~\ref{fig:intro-learning-edge}), which we refer to as the learning edge. However, this edge is dynamic: prompts selected using historical metadata quickly lose real-time utility, whereas prompts verified under the current policy retain higher gradient norms (Figures~\ref{fig:intro-history-staleness}); prompts move substantially among High, Low, and Zero utility classes across training steps (Figure~\ref{fig:intro-utility-shift}). These observations suggest that an effective prompt selection framework for efficient LLM RL should be \textbf{online}, to track the moving learning edge, and \textbf{efficient}, to avoid excessive selection overhead.

Based on previous analysis, we propose \textbf{HIVE} (\textbf{H}istory-\textbf{I}nformed \textbf{V}erification of the learning \textbf{E}dge), a dual-stage 
framework for rollout-efficient RLVR (Figure~\ref{fig:framework}). HIVE first performs history-informed coarse narrowing using 
reward trajectories and response-side entropies accumulated from previous training steps. This stage targets efficiency by retaining exploration probability for prompts that are around the edge. HIVE then utilizes prompt entropy as a high-fidelity and efficient proxy to perform online verification under the current policy, filtering stale candidates before the rollout phase.
% HIVE then performs online verification with prompt-side entropy 
% computed under the current policy. This stage restores online precision and filters 
% stale candidates before the rollout phase.

% Evaluation for HIVE is conducted across multiple reasoning benchmarks (e.g., {math, coding, and tool calling}) and models: Qwen3-4B~\cite{yang2025qwen3technicalreport}, 
Evaluation for HIVE is conducted across multiple reasoning benchmarks (e.g., math and tool calling) and models: Qwen3-4B-Base~\cite{yang2025qwen3technicalreport}, 
Qwen2.5-Math-1.5B/7B~\cite{yang2024qwen25mathtechnicalreportmathematical}, DeepSeek-R1-Distill-Qwen-1.5B~\cite{guo2025deepseek}, 
% Qwen2.5-14B/32B~\cite{qwen2.5-large}, and Llama-3.2-3B-Instruct~\cite{grattafiori2024llama3}. HIVE maintains comparable or better accuracy than Dynamic Sampling~\cite{yu2025dapo}, Pre-filter, GRESO~\cite{zheng2025act}, and PCL~\cite{Gao2025PCL} while using fewer rollouts, achieving up to \textbf{3.8$\times$ speedup in rollout} and \textbf{2.3$\times$ faster total training time} for models like Qwen2.5-Math-7B. Finally, we show that HIVE drastically lowers computational overhead, reducing rollouts by \textbf{up to 9.2 million} while consistently maintaining or even exceeding the reasoning accuracy of strong selection baselines across DAPO+MATH, Open-R1, and larger Qwen2.5 model scales.
Qwen2.5-14B/32B~\cite{qwen2.5-large}, and Llama-3.2-3B-Instruct~\cite{grattafiori2024llama3}. HIVE significantly outperforms 
baselines like Dynamic Sampling~\cite{yu2025dapo}, GRESO~\cite{zheng2025act} and PCL~\cite{Gao2025PCL}, achieving up to \textbf{2.3$\times$ faster total training time} for models like Qwen2.5-Math-7B. Finally, we show that HIVE lowers computational 
overhead, achieving \textbf{up to 9.2 million rollouts reduction} while consistently maintaining the reasoning performance. Beyond mathematical reasoning, HIVE also attains the best BFCL-V2 tool-calling score while using \textbf{4.8$\times$ fewer rollouts} than Dynamic Sampling, suggesting that moving-edge tracking extends to verifiable tool-use tasks.
%Besides, \red{the studies on coding and tool-calling tasks also show the generality of HIVE beyond math reasoning.}
% Our contributions are threefold:
% \begin{itemize}[topsep=2pt,itemsep=1pt,leftmargin=*]
%     and efficient required for practical selection. 
%     \item We propose HIVE, a dual-stage selector that combines zero-cost history-informed narrowing with low-cost online verification. 
%     \item We demonstrate across multiple tasks and model scales that HIVE significantly reduces rollout and wall-clock training costs while 
%     preserving or improving performance on those tasks.
% \end{itemize}
\section{Empirical Observations: The Moving Learning Edge}
\label{sec:obs-preliminary}
This section dissects the training dynamics of GRPO-style RLVR to identify what makes a 
prompt useful for policy optimization and why effective selection in RLVR is an online tracking problem. 
We first recall why low-utility prompts produce no GRPO learning signal, then show that the useful 
prompts concentrate at a moving edge characterized by intermediate difficulty and high uncertainty. {The experimental details are reported in Appendix~\ref{sec:preliminary-setup}.}

\begin{figure}[t]
    \centering
    \captionsetup[subfigure]{labelformat=parens,labelsep=none,font=footnotesize,justification=centering}
    % Four independent panels, each generated by its own Python script
    % under unified style (see ICML2026-submission-figure-script-drawio-todo/wandb-figure/).
    \begin{subfigure}[t]{0.241\linewidth}
        \centering
        \includegraphics[width=\linewidth]{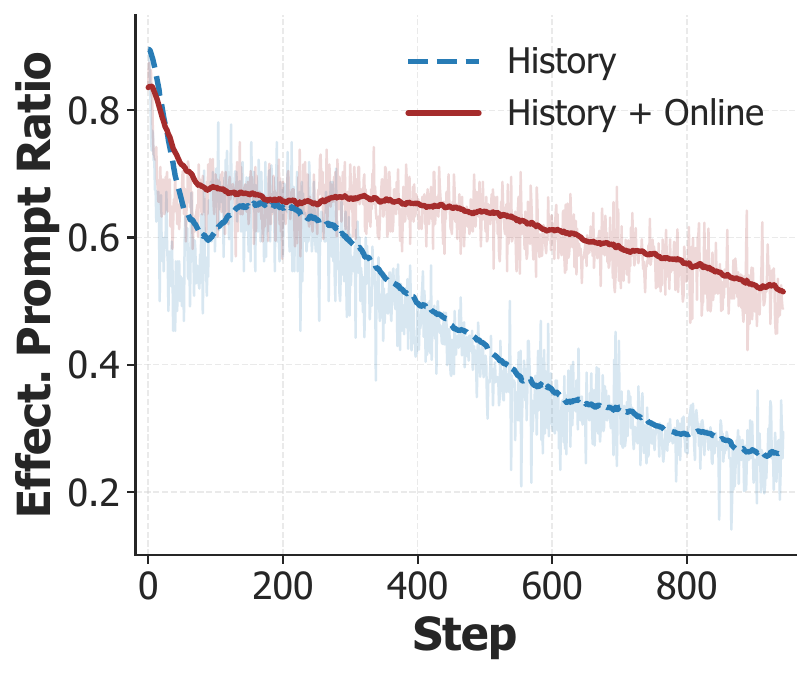}
        \vspace{-6pt}
        \caption{}
        \label{fig:intro-history-staleness}
    \end{subfigure}\hfill
    \begin{subfigure}[t]{0.245\linewidth}
        \centering
        \includegraphics[width=\linewidth]{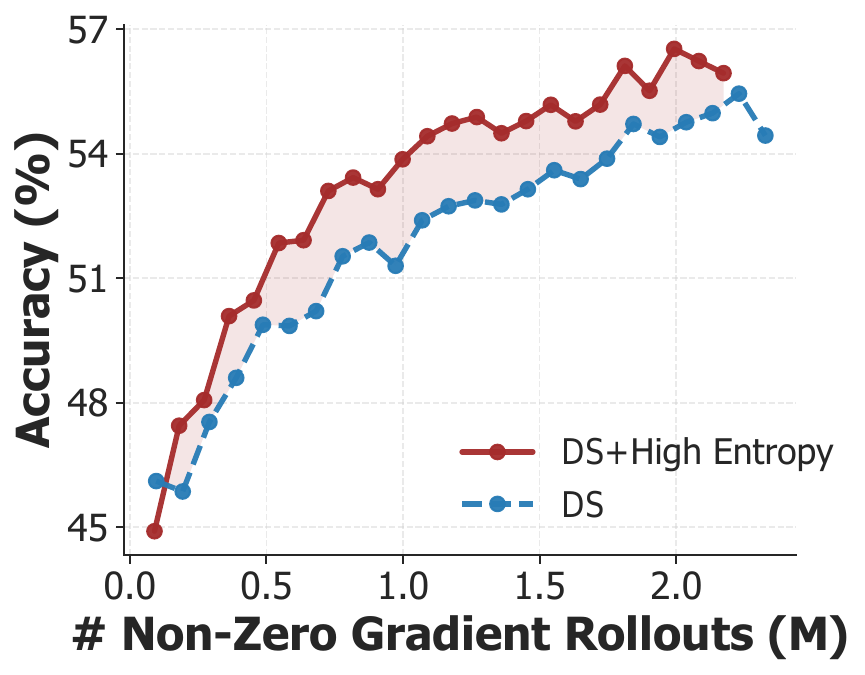}
        \vspace{-6pt}
        \caption{}
        \label{fig:intro-high-entropy}
    \end{subfigure}\hfill
    \begin{subfigure}[t]{0.245\linewidth}
        \centering
        \includegraphics[width=\linewidth]{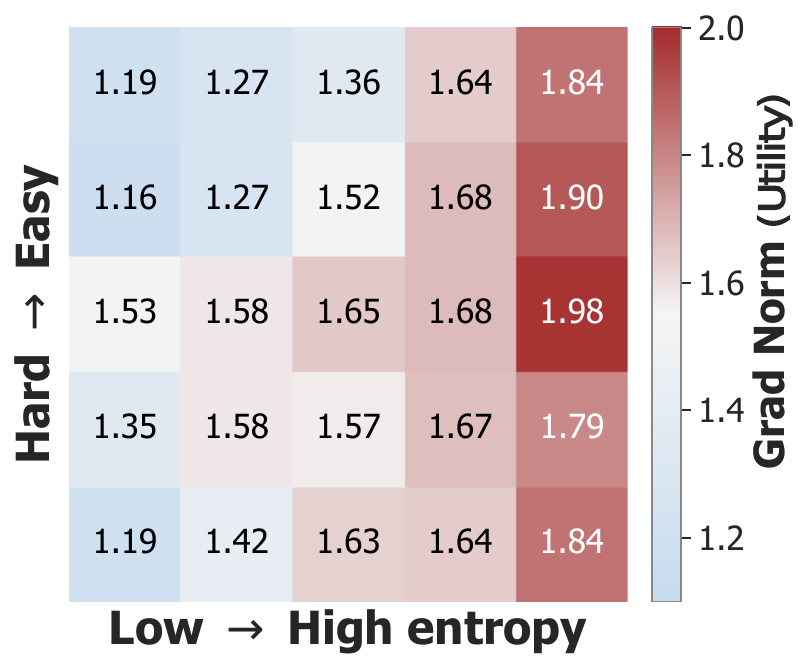}
        \vspace{-6pt}
        \caption{}
        \label{fig:intro-learning-edge}
    \end{subfigure}\hfill
    \begin{subfigure}[t]{0.241\linewidth}
        \centering
        \includegraphics[width=\linewidth]{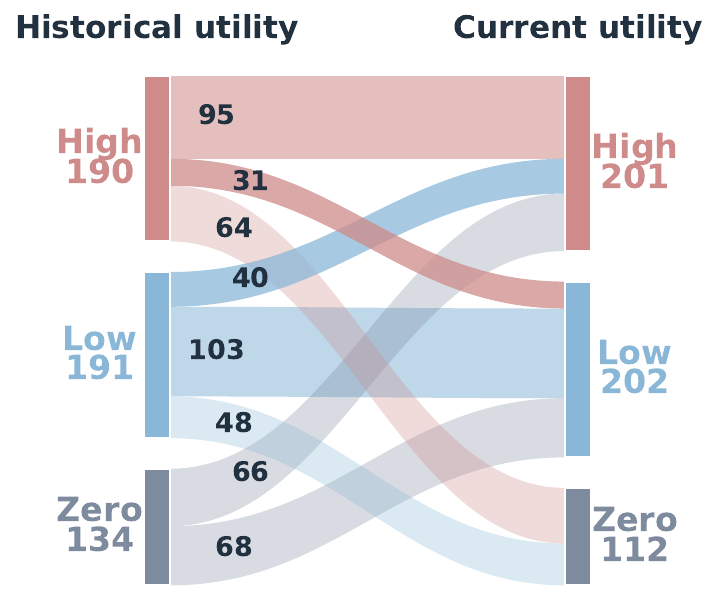}
        \vspace{-6pt}
        \caption{}
        \label{fig:intro-utility-shift}
    \end{subfigure}
    \caption{\textbf{Empirical observations.}
    (a) Selection only based on historical statistics becomes increasingly stale during training, while online verification retains more effective prompts.
    (b) Among prompts with non-zero gradients, prioritizing high-entropy
        prompts improves accuracy at the same rollout budget.
    (c) Gradient utility concentrates at an intermediate-difficulty,
        high-entropy learning edge.
    (d) Across two checkpoints (steps 500 and 1000), a large fraction of sampled
        prompts shift between High, Low, and Zero utility classes, so
        prompts selected by historical metadata no longer match the
        current policy's utility distribution.
    }
    \label{fig:intro-a-b}
    \vskip -0.15in
\end{figure}

\begin{figure}[t]
    \centering
    \includegraphics[width=1.0\linewidth]{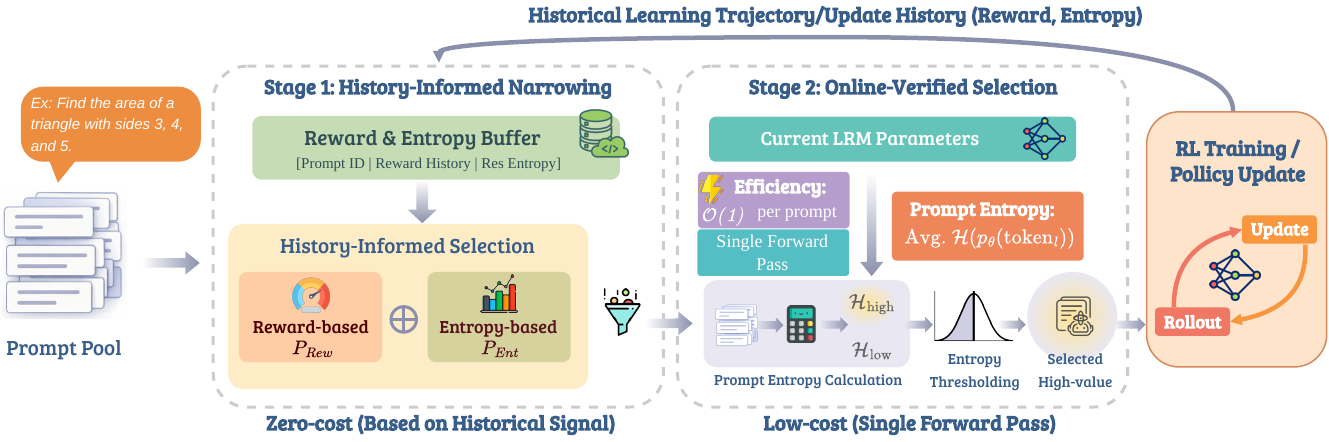}
    \caption{Overview of HIVE. \textbf{HIVE tracks the moving learning edge by decomposing prompt selection into two stages.}
    \textbf{Stage 1} uses historical reward trajectories and response-side entropy for efficient coarse narrowing. \textbf{Stage 2} computes prompt-side entropy under the current policy for low-cost online verification, rejecting stale candidates before rollout.}
    \label{fig:framework}
    \vskip -0.2in
\end{figure}

\textbf{Zero-variance prompts produce vanishing advantages.}
Group relative policy optimization (GRPO)~\cite{shao2024deepseekmath} is widely used in RLVR for reasoning models~\cite{guo2025deepseek,yang2025qwen3technicalreport}. For a prompt $q$, GRPO samples a group of responses $\{o_i\}_{i=1}^G$ and optimizes
% \begin{equation}
% \label{eq:grpo}
% \begin{aligned}
% \mathcal{J}_{\mathrm{GRPO}}(\theta)
% =
% \mathbb{E}_{\substack{
% q \sim P(Q), \{o_i\}_{i=1}^G \sim \pi_{\theta_{\mathrm{old}}}(\cdot \mid q)
% }}
% \Bigg[
% \frac{1}{G}\sum_{i=1}^G \frac{1}{|o_i|}
% \sum_{t=1}^{|o_i|} \Bigg(
% \min \left(\rho_{i,t}(\theta) \hat{A}_{i, t},&\\
%  \operatorname{clip}(\rho_{i,t}, 1 \pm \varepsilon) \hat{A}_{i, t}\right)
% - \beta D_{\mathrm{KL}}(\pi_\theta \,\|\, \pi_{\mathrm{ref}})
% \Bigg)
% \Bigg].&
% \end{aligned}
% \end{equation}
\begin{equation}
\label{eq:grpo}
\begin{aligned}
\mathcal{J}_{\mathrm{GRPO}}(\theta)
&=
\mathbb{E}_{\substack{
q \sim P(Q),\ \{o_i\}_{i=1}^{G} \sim \pi_{\theta_{\mathrm{old}}}(\cdot \mid q)
}}
\Biggl\{
\frac{1}{G}\sum_{i=1}^{G}\frac{1}{|o_i|}
\sum_{t=1}^{|o_i|}
\biggl(
\min \Bigl[
\frac{\pi_{\theta}(o_{i,t}\mid q,o_{i,<t})}{\pi_{\theta_{\mathrm{old}}}(o_{i,t}\mid q,o_{i,<t})}\hat{A}_{i,t}, \\
&\qquad\qquad\;
\operatorname{clip}(
\frac{\pi_{\theta}(o_{i,t}\mid q,o_{i,<t})}{\pi_{\theta_{\mathrm{old}}}(o_{i,t}\mid q,o_{i,<t})}, 1-\varepsilon, 1+\varepsilon
)\hat{A}_{i,t}
\Bigr]
-\beta D_{\mathrm{KL}}\!\left[
\pi_{\theta}\,\|\,\pi_{\mathrm{ref}}
\right]
\biggr)
\Biggr\},
\end{aligned}
\end{equation}
% where
% \begin{equation}
%     \mathcal{L}^{\mathrm{clip}}_{i,t}(\theta)=\min \left(\rho_{i,t}(\theta) \hat{A}_{i, t}, \operatorname{clip}(\rho_{i,t}, 1 \pm \varepsilon) \hat{A}_{i, t}\right).
% \end{equation}
% where the group-normalized advantage $\hat{A}_{i}$ is computed as follows\red{ need more justifications on the notations}:
where $P(Q)$ is the prompt distribution, $G$ is the response group size, and $o_{i,t}$ denotes the $t$-th token
of response $o_i$. The KL penalty with weight $\beta$ anchors $\pi_\theta$ to the
reference policy $\pi_{\mathrm{ref}}$. Since RLVR gives a sequence-level reward $r_i$, all tokens in $o_i$
share the group-normalized advantage $\hat{A}_{i,t}=\hat{A}_{i}$:
\begin{equation}
\label{eq:grpo_advantage}
\hat{A}_{i} = \frac{r_i - \mathrm{mean}(\{r_i\}_{i=1}^G)}{\mathrm{std}(\{r_i\}_{i=1}^G)}.
\end{equation}
When all sampled responses for a prompt receive the same reward, the reward variance is zero and the prompt contributes no informative policy-gradient signal. Such prompts are either already easy for the current policy or still too hard to discriminate among sampled responses. Rollout scaling therefore wastes compute unless it avoids these zero-variance regions.

\textbf{Utility concentrates at an intermediate-difficulty, high-entropy edge.}
Zero-variance filtering is necessary but not sufficient: among prompts with non-zero gradients, some still provide much stronger learning signal than others. Following previous analyses~\cite{kim2025mitigatinglengthbiasrlhf,paul2021DataDiet,fayyaz2022bertdatadietfinding,singhal2024aCOLM}, we use length-normalized gradient norm as a proxy for per-prompt utility and study how it varies with empirical difficulty and response-side entropy.

Figure~\ref{fig:intro-learning-edge} shows two consistent patterns. First, within a fixed difficulty band, gradient norm increases with response-side entropy. Second, at a fixed entropy level, utility follows an inverted-U shape over 
difficulty: trivial prompts and intractable prompts have low utility, while intermediate-difficulty prompts have 
higher utility. The strongest learning signal appears at the intersection of these two conditions. We call this 
region the \textit{learning edge}. This observation explains why merely removing zero-variance prompts is too 
coarse: \textit{the best rollout budget should be allocated to prompts that are both learnable and uncertain under the 
current policy}.

\textbf{Historical metadata becomes stale as the edge moves.}
As the policy updates, the model's state of knowledge changes, and historical 
metadata such as reward trajectories and response entropies can quickly become stale~\cite{cui2025entropymechanismreinforcementlearning,tomihari2026learningdynamicsrlposttraining,li2026stalefeedbackcoevolvingcritics}. 
To quantify this effect, we compare prompts selected by historical metadata from the previous training window 
with prompts selected by online metrics recomputed under the current policy. As shown in Figure~\ref{fig:intro-utility-shift}, 
historical selection yields lower current gradient norms, whereas online selection isolates prompts with high utility. Figure~\ref{fig:intro-utility-shift} further shows that prompts frequently move among High, Low, and Zero utility classes across training steps. Together, these results indicate that: \textit{a selector must revalidate candidate prompts under the current policy to keep tracking the moving edge}.

\section{Methodology}
\label{sec:method}
\begingroup
\setlength{\abovedisplayskip}{5pt plus 1pt minus 1pt}
\setlength{\belowdisplayskip}{5pt plus 1pt minus 1pt}
\setlength{\abovedisplayshortskip}{3pt plus 1pt minus 1pt}
\setlength{\belowdisplayshortskip}{3pt plus 1pt minus 1pt}
Section~\ref{sec:obs-preliminary} shows that useful RLVR prompts concentrate near a moving learning edge and historical metadata alone is unreliable because the edge moves as the policy changes. Based on these observations, HIVE is designed as a two-stage pre-rollout selector (Figure~\ref{fig:framework} and Algorithm~\ref{alg:hive-training}). Stage~1 uses historical reward and response-entropy traces as a cheap prior to form a candidate pool; Stage~2 revalidates those candidates under the current policy using prompt-side entropy before expensive rollouts for the standard GRPO update.

\subsection{Design Objective}
A practical RLVR prompt selector must be both \textbf{online}, adapting as the policy evolves and the learning frontier shifts during training, and \textbf{efficient}, avoiding full response rollouts that could erase its computational benefits. HIVE achieves this via two stages. Stage~1 targets efficiency: it uses historical reward trajectories and response-side entropies to cheaply narrow the full prompt pool to a candidate set. Stage~2 restores online precision: it verifies candidates with prompt-side entropy computed under the current model parameters. The final selected prompts are then passed to the standard rollout and GRPO update. The detailed procedure is provided in Appendix~\ref{appendix-sec:alg}.

\subsection{Stage 1: History-Informed Coarse Narrowing}
Stage~1 removes prompts that are unlikely to lie near the learning edge while keeping enough exploration to handle edge movement. It combines a reward-based score, which tracks difficulty saturation, with an entropy-based score, which favors historically uncertain prompts.

\textbf{Reward-history score.}
For each prompt $x_i\in\mathcal{D}$, HIVE stores a trace $\mathcal{T}_i^{(t)}=\{R_{i,j}\}_{j=1}^{n_i}$ before training step $t$, where $R_{i,j}=\{r_{i,j}^{(k)}\}_{k=1}^{G}$ is the reward group recorded at the $j$-th previous visit. Let $\zeta_{i,j}=\mathbf{1}\{\operatorname{Var}(R_{i,j})=0\}$ indicate whether that visit produced a zero-variance reward group. HIVE summarizes recent saturation by the length of the trailing zero-variance run:
\begin{equation}
\label{eq:stage1_reward_score}
\begin{aligned}
&z_i^{(t)}
=\max\left\{k\in\{0,\ldots,n_i\}:
\prod_{j=n_i-k+1}^{n_i}\zeta_{i,j}=1\right\},\\
&s_{\mathrm{rew}}(x_i)
=\left(p_{e,\tau_i}^{(t)}\right)^{z_i^{(t)}},
\qquad \tau_i\in\{\mathrm{easy},\mathrm{hard}\}.
\end{aligned}
\end{equation}
Here $\tau_i$ distinguishes all-correct and all-incorrect zero-variance prompts, and $p_{e,\tau_i}^{(t)}\in(0,1)$ is the corresponding exploration probability. Repeated zero-variance outcomes therefore decay the retention score exponentially, but do not deterministically discard the prompt. This is important because an intractable prompt may later become learnable as the policy improves.

HIVE adapts the easy and hard exploration probabilities online. If the observed zero-variance ratio $\hat{\rho}_{\tau}^{(t)}$ for type $\tau$ exceeds the target $\alpha$, HIVE decreases $p_{e,\tau}$; otherwise, it increases it:
\begin{equation}
\label{eq:adaptive_pe}
p_{e,\tau}^{(t+1)}
=
\left[
p_{e,\tau}^{(t)}
+\Delta p\cdot\operatorname{sign}\!\left(\alpha-\hat{\rho}_{\tau}^{(t)}\right)
\right]_{[p_{\min},p_{\max}]},
\qquad \tau\in\{\mathrm{easy},\mathrm{hard}\},
\end{equation}
where $[\cdot]_{[p_{\min},p_{\max}]}$ denotes projection to a valid probability range. This update prevents HIVE from persistently over-sampling either mastered prompts or currently intractable prompts.

\textbf{Response-entropy score.}
Historical response-side entropy provides a low-cost prior for uncertainty. For the most recent rollout group of $x_i$, let
\begin{equation}
\label{eq:stage1_entropy_score}
\begin{aligned}
&U_i^{(r)}
=\frac{1}{L_{i,r}}\sum_{\ell=1}^{L_{i,r}}
\mathcal{H}\!\left(p_{\theta}(\cdot\mid x_i,o_{i,<\ell}^{(r)})\right),
\qquad
H_i=\frac{1}{G}\sum_{r=1}^{G}U_i^{(r)},\\
&s_{\mathrm{ent}}(x_i)
=\frac{H_i-H_{\min}}{H_{\max}-H_{\min}+\epsilon},
\qquad\;\;
\mathcal{H}(p)=-\sum_{v\in\mathcal{V}}p_v\log p_v .
\end{aligned}
\end{equation}
The normalization is computed within the current metadata pool. It makes the entropy score comparable across training stages even as the policy becomes more confident.

\textbf{Candidate sampling.}
The Stage~1 acceptance probability combines recent reward saturation and historical uncertainty:
\begin{equation}
\label{eq:stage1_select}
P_{\mathrm{S1}}(x_i)
=
\lambda s_{\mathrm{rew}}(x_i)
+(1-\lambda)s_{\mathrm{ent}}(x_i),
\qquad \lambda\in[0,1].
\end{equation}
HIVE samples prompts with Bernoulli probability $P_{\mathrm{S1}}(x_i)$ until it obtains a candidate set $\mathcal{C}_t$ for Stage~2. Since all quantities in Eq.~\eqref{eq:stage1_select} are from existing training metadata, Stage~1 is zero-cost.

\subsection{Stage 2: Online-Verified Selection}
\label{subsec:stage2}
% Stage~1 is efficient, but its metadata becomes stale as model is consistently trained. Stage~2 therefore verifies candidate prompts under the current policy before rollouts are generated. Full online verification through response generation would require $G$ responses per candidate, so HIVE instead uses prompt-side entropy as a lightweight proxy for current uncertainty.
Stage~1 is efficient, but Section~\ref{sec:obs-preliminary} shows why it cannot be the final decision rule: reward and entropy traces become stale as the current policy moves. Stage~2 therefore verifies the candidate set $\mathcal{C}_t$ under the current policy $\pi_{\theta_t}$ before rollouts are generated. Instead of generating $G$ responses per candidate, HIVE uses prompt-side entropy as a lightweight proxy for current uncertainty.

\textbf{Prompt-side entropy.}
For a prompt $x=(x_1,\ldots,x_{L_x})$, HIVE computes
\begin{equation}
\label{eq:prompt_entropy}
V_t(x)
=
\frac{1}{L_x-1}\sum_{\ell=2}^{L_x}
\mathcal{H}\!\left(p_{\theta_t}(\cdot\mid x_{<\ell})\right).
\end{equation}
This score is online because it is evaluated with the current policy parameters, and efficient because it only requires a prompt-side forward pass. In the HIVE pipeline, this verifier complements Stage~1: reward history filters repeated zero-variance regions, while $V_t(x)$ checks whether a historically promising candidate still exhibits current-policy uncertainty.

We use an informal rank-consistency result to justify prompt entropy as a verifier, rather than as a complete utility model. Under the representation approximation and entropy propagation assumptions stated in Appendix~\ref{subsec:thm-theoretical-proof}, prompt-side entropy preserves the ranking of expected response-side entropy when the prompt-entropy margin is larger than the combined approximation and sampling noise.
\begin{theorem}[Informal rank consistency]
\label{thm:ranking-informal}
For any pair of prompts $(x,x')$, if the difference in their prompt-side entropy is sufficiently larger as specified in Appendix~\ref{appendix-sec:proof-thm}, then with high probability
\begin{equation}
\operatorname{sign}\!\left(V_t(x)-V_t(x')\right)
=
\operatorname{sign}\!\left(\hat{U}_t(x)-\hat{U}_t(x')\right),
\end{equation}
where $\hat{U}_t$ is the empirical response-side entropy estimator under the current policy.
\end{theorem}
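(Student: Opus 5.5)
The argument decomposes the gap between the two empirical response-side entropies into a deterministic signal term driven by prompt entropy and two stochastic error terms.

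\textbf{Step 1: assumptions.} Let $U_t(x)=\mathbb{E}_{o\sim\pi_{\theta_t}(\cdot\mid x)}[\frac{1}{|o|}\sum_\ell \mathcal{H}(p_{\theta_t}(\cdot\mid x,o_{<\ell}))]$ be the population response-side entropy, so that $\hat U_t(x)=\frac1G\sum_{r=1}^G U^{(r)}(x)$ is its Monte Carlo estimate from $G$ rollouts. I would use two assumptions, in the form I expect Appendix~\ref{subsec:thm-theoretical-proof} to state them.
\begin{itemize}
\item \textbf{Entropy propagation / representation approximation.} There exist a strictly increasing map $g$ with slope at least $\kappa>0$ and an error $\delta\ge 0$ such that $|U_t(x)-g(V_t(x))|\le\delta$ for all prompts. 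The linear case $U_t=aV_t+b+\eta(x)$ with $|\eta|\le\delta$ is enough.
\item \textbf{Bounded per-token entropy.} Each $U^{(r)}(x)\in[0,\log|\mathcal{V}|]$, since every per-token entropy lies in this range.
\end{itemize}

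\textbf{Step 2: decomposition.} Write
\[
\hat U_t(x)-\hat U_t(x')=\bigl[g(V_t(x))-g(V_t(x'))\bigr]+\bigl[\eta(x)-\eta(x')\bigr]+\bigl[\xi(x)-\xi(x')\bigr],
\]
where $\xi=\hat U_t-U_t$ is the sampling error. Assume without loss of generality that $V_t(x)>V_t(x')$. Monotonicity then bounds the first bracket below by $\kappa\,(V_t(x)-V_t(x'))$. The second bracket is at most $2\delta$ in absolute value.

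\textbf{Step 3: concentration.} The $G$ rollouts for a fixed prompt are i.i.d. given $\theta_t$, so Hoeffding's inequality gives
\[
\Pr\bigl(|\xi(x)|>\epsilon\bigr)\le 2\exp\!\left(-\frac{2G\epsilon^2}{\log^2|\mathcal{V}|}\right).
\]
A union bound over $x$ and $x'$ makes $|\xi(x)-\xi(x')|\le 2\epsilon$ hold with probability at least $1-4\exp(-2G\epsilon^2/\log^2|\mathcal{V}|)$.

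\textbf{Step 4: conclusion.} Choose $\epsilon=\log|\mathcal{V}|\sqrt{\log(4/\gamma)/(2G)}$. Impose the margin condition
\[
\kappa\,|V_t(x)-V_t(x')|>2\delta+2\epsilon,
\]
which is exactly the ``sufficiently larger'' requirement in the statement. On the good event of Step 3, the sum in Step 2 is then strictly positive, so the signs agree with probability at least $1-\gamma$. The case $V_t(x)<V_t(x')$ is symmetric.

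\textbf{Main obstacle.} The concentration step is routine. The hard part is Step 1: justifying, or at least stating cleanly, why prompt-side entropy controls response-side entropy monotonically up to a uniform error $\delta$.

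For a self-contained derivation, I would first model the hidden state after the prompt as $h(x)$. I would then assume that per-token response entropies are Lipschitz in $h$ along trajectories, and that $h(x)$ is approximately an increasing function of the average prefix entropy $V_t(x)$. Composing these assumptions yields $g$ and $\delta$.

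Two further points need care. Response lengths vary, but the length normalization already makes each $U^{(r)}$ an average of bounded terms, so boundedness holds. Token-level dependence within a response does not break Hoeffding's inequality, because it is applied across independent responses, not across tokens.
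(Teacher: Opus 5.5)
Your proposal is correct and follows essentially the same route as the paper: you decompose the response-entropy gap into a prompt-entropy signal plus a residual, apply Hoeffding to the $G$ i.i.d.\ bounded per-rollout entropies with a union bound over $x$ and $x'$, and impose a margin condition so the signal dominates. The only difference is packaging: the paper splits your single assumption into a representation-approximation residual $\delta$ and a linear-propagation residual $\epsilon$, passing through latent entropies $S_{\text{prompt}},S_{\text{resp}}$; composing them gives exactly your assumption with $g(v)=av+b$ and residual $\epsilon+(a+1)\delta$, so with your Hoeffding tolerance equal to the paper's $\eta(\alpha)$ your margin coincides with its condition $|a\Delta_V|>2\epsilon+2(a+1)\delta+2\eta(\alpha)$.
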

The theorem explains why prompt-side entropy can serve as a low-cost online verifier. The main support for using this proxy remains empirical: Appendix~\ref{appendix-sec:additional-exp} reports prompt--response entropy correlations, and Section~\ref{subsec:exp-component-study} shows that removing Stage~2 weakens the accuracy--rollout tradeoff.

\textbf{Median verification gate.}
Given the Stage~1 candidate set $\mathcal{C}_t$, HIVE uses a median gate:
\begin{equation}
\label{eq:median_gate}
\gamma_t=\operatorname{median}\{V_t(x):x\in\mathcal{C}_t\},
\qquad
\mathcal{B}_t=\{x\in\mathcal{C}_t:V_t(x)\ge\gamma_t\}.
\end{equation}
The final set $\mathcal{B}_t$ is the rollout batch for the current GRPO step. The median threshold keeps throughput stable, adapts to the current candidate distribution, and rejects stale low-uncertainty prompts.

\textbf{Computational cost.}
For $|\mathcal{C}_t|$ candidates, Stage~2 costs $\mathcal{O}(|\mathcal{C}_t|)$ times forward computation. A rollout-based verifier would require $\mathcal{O}(|\mathcal{C}_t|GL_r)$ times. Thus prompt-side verification is negligible relative to multi-rollout filtering. Section~\ref{subsec:further-experiment} empirically confirms this: online verification adds only 0.82 seconds per iteration, less than 0.4\% of the step time.
\endgroup

\begin{table*}[t]
    \centering
    % 总标题（可选，如果不需要总标题可以留空或删除）
    \caption{Comprehensive evaluation of HIVE on math reasoning benchmarks and efficiency analysis. We train multiple models on DAPO+MATH. (a) HIVE maintains comparable accuracy while reducing the number of rollouts. (b) HIVE lowers rollout cost, achieving up to 3.8$\times$ rollout speedup and 2.3$\times$ total training speedup over Dynamic Sampling.}
    \label{tab:main_comparison}

    % --- 左侧表格 (Performance) ---
    \begin{subtable}[t]{0.666\textwidth}
        \centering
        \caption{Performance (\%) comparison.} % 独立的 Label
        \label{tab:model_performance}
        % 使用 resizebox 确保表格适应 subtable 的宽度
        \resizebox{\linewidth}{!}{
            \begin{tabular}{c|cccccc|c|c}
            \toprule
            {Method} & {Math500} & {AIME24} & {AMC} & {Gaokao} & {Miner.} & {Olymp.} & {Avg.} & {\#Rollout} \\
            \midrule
            
            % Block 1: Qwen2.5-Math-1.5B
            \multicolumn{9}{c}{{\textit{Qwen2.5-Math-1.5B}}} \\
            \midrule
            DS    & 77.3 & 16.7 & 61.7 & 64.2 & 31.8 & 38.7 & 48.4 & 7.6M (1.0$\times$) \\
            Pre-filter & 76.8 & 20.8 & 50.0 & 62.1 & 33.2 & 38.5 & 46.1 & 3.3M (2.3$\times$) \\
            GRESO & 77.3 & 15.0 & 59.3 & 66.2 & 32.6 & 38.5 & 48.1 & 3.3M (2.3$\times$)\\
            PCL   & 74.5 & 21.7 & 47.9 & 61.7 & 33.6 & 37.5 & 45.1 & 3.3M (2.3$\times$) \\
            HIVE  & 77.9 & 16.7 & 60.2 & 66.1 & 31.5 & 40.3 & 48.8 & \best{3.1M (2.5$\times$)} \\
            \midrule

            % Block 2: DeepSeek-R1-Distill-Qwen-1.5B
            \multicolumn{9}{c}{{\textit{DeepSeek-R1-Distill-Qwen-1.5B}}} \\
            \midrule
            DS    & 87.9 & 36.7 & 71.7 & 78.7 & 35.3 & 54.9 & 60.9 & 2.4M (1.0$\times$)\\
            Pre-filter & 79.7 & 22.5 & 55.1 & 70.5 & 27.2 & 47.5 & 49.6 & 1.6M (1.5$\times$) \\
            GRESO & 85.5 & 37.5 & 70.7 & 76.2 & 34.3 & 52.1 & 59.4 & 1.6M (1.5$\times$)\\
            PCL   & 82.4 & 26.7 & 60.5 & 71.7 & 29.9 & 50.3 & 53.3 & 1.6M (1.5$\times$)\\
            HIVE  & 87.2 & 37.5 & 70.1 & 77.2 & 35.1 & 53.2 & 60.1 & \best{1.5M (1.6$\times$)} \\
            \midrule

            % Block 3: Llama3.2-3b-Instruct
            \multicolumn{9}{c}{{\textit{Llama3.2-3b-Instruct}}} \\
            \midrule
            DS    & 52.4 & 16.7 & 46.0 & 49.2 & 20.0 & 20.2 & 34.1 & {11.4M (1.0$\times$)} \\
            Pre-filter & 48.1 & 4.2 & 20.2 & 39.8 & 17.8 & 16.8 & 24.5 & 5.9M (1.9$\times$) \\
            GRESO & 51.8 & 16.7 & 46.3 & 49.4 & 19.7 & 19.6 & 33.9 & 5.9M (1.9$\times$) \\
            PCL   & 53.1 & 15.8 & 27.1 & 48.2 & 22.5 & 19.3 & 30.4 & 5.9M (1.9$\times$) \\
            HIVE  & 53.8 & 17.5 & 45.5 & 49.1 & 20.5 & 20.0 & 34.4 & \best{4.8M (2.4$\times$)} \\
            \midrule

            % Block 4: Qwen3-4B-Base
            \multicolumn{9}{c}{{\textit{Qwen3-4B-Base}}} \\
            \midrule
            DS    & 84.4 & 22.5 & 60.5 & 72.1 & 40.2 & 49.7 & 53.6 & 3.0M (1.0$\times$) \\
            Pre-filter & 80.3 & 19.2 & 51.8 & 67.4 & 38.7 & 46.8 & 50.0 & 1.6M (1.8$\times$) \\
            GRESO & 83.5 & 25.8 & 59.6 & 71.8 & 39.9 & 50.5 & 53.8 & 1.6M (1.8$\times$) \\
            PCL   & 80.5 & 18.3 & 52.1 & 67.5 & 39.8 & 46.5 & 49.5 & 1.6M (1.8$\times$) \\
            HIVE  & 84.7 & 22.5 & 59.9 & 71.0 & 40.1 & 50.4 & 53.9 & \best{1.5M (2.0$\times$)} \\
            \midrule

            % Block 5: Qwen2.5-Math-7B
            \multicolumn{9}{c}{{\textit{Qwen2.5-Math-7B}}} \\
            \midrule
            DS    & 82.9 & 34.2 & 79.2 & 71.7 & 35.4 & 43.6 & 57.8 & 13.1M (1.0$\times$)\\
            Pre-filter & 76.8 & 34.2 & 61.4 & 62.4 & 33.6 & 41.3 & 50.5 & 6.3M (2.1$\times$) \\
            GRESO & 93.2 & 30.0 & 78.9 & 70.2 & 35.2 & 44.1 & 58.6 & 6.3M (2.1$\times$)\\
            PCL   & 76.4 & 32.5 & 62.7 & 65.1 & 34.4 & 40.8 & 51.4 & 6.3M (2.1$\times$) \\
            HIVE  & 93.0 & 36.7 & 79.2 & 70.8 & 34.8 & 43.8 & 59.7 & \best{3.9M (3.4$\times$)} \\
            \bottomrule
            \end{tabular}
        }
    \end{subtable}
    \hfill % 左右表格之间的弹性间距
    % --- 右侧表格 (Efficiency) ---
    \begin{subtable}[t]{0.324\textwidth}
        \centering
        \caption{Training time (hours).}
        \label{tab:efficiency_cost} % 独立的 Label
        
        % 同样使用 resizebox 适应宽度
        \resizebox{\linewidth}{!}{
            \begin{tabular}{c|ccc|c}
            \toprule
            Method & Train & Other & Rollout & Total \\ % 稍微缩写一下表头以适应窄宽
            \midrule
            
            % Block 1: Qwen2.5-Math-1.5B
            \multicolumn{5}{c}{{\textit{Qwen2.5-Math-1.5B}}} \\
            \midrule
            DS    & 10.9 & 6.0 & 77.3 & 94.2 \\
            Pre-filter & 11.0 & 6.0 & 45.9 & 62.9 \\
            GRESO & 11.3 & 6.4 & 46.0 & 63.7 \\
            PCL   & 11.4 & 7.0 & 46.3 & 64.7 \\
            HIVE  & 11.2 & 6.2 & \best{42.1} & \best{59.5} \\
            \midrule

            % Block 2: DeepSeek
            \multicolumn{5}{c}{{\textit{DeepSeek-R1-Distill-Qwen-1.5B}}} \\
            \midrule
            DS    & 9.6  & 6.1 & 68.7 & 84.4 \\
            Pre-filter & 10.1 & 6.2 & 45.8 & 62.1 \\
            GRESO & 10.4 & 6.6 & 45.9 & 62.9 \\
            PCL   & 10.5 & 7.1 & 46.0 & 63.6 \\
            HIVE  & 10.1 & 6.7 & \best{42.7} & \best{59.5} \\
            \midrule

            % Block 3: Llama3.2
            \multicolumn{5}{c}{{\textit{Llama3.2-3b-Instruct}}} \\
            \midrule
            DS    & 20.6 & 10.2 & {145.0} & 175.8 \\
            Pre-filter & 21.4 & 10.0 & 57.3 & 88.7 \\
            GRESO & 21.8 & 10.5 & 57.5 & 89.8 \\
            PCL   & 22.0 & 11.2 & 57.6 & 90.8 \\
            HIVE  & 22.2 & 10.6 & \best{46.9} & \best{79.7} \\
            \midrule

            % Block 4: Qwen3-4B-Base
            \multicolumn{5}{c}{{\textit{Qwen3-4B-Base}}} \\
            \midrule
            DS    & 9.7  & 6.2 & 86.0 & 101.9 \\
            Pre-filter & 10.2 & 6.3 & 45.9 & 62.4 \\
            GRESO & 10.5 & 6.7 & 46.1 & 63.3 \\
            PCL   & 10.6 & 7.3 & 46.1 & 64.0 \\
            HIVE  & 10.2 & 6.8 & \best{42.8} & \best{59.8} \\
            \midrule

            % Block 5: Qwen2.5-Math-7B
            \multicolumn{5}{c}{{\textit{Qwen2.5-Math-7B}}} \\
            \midrule
            DS    & 32.1 & 12.6 & 153.7 & {198.4} \\
            Pre-filter & 32.4 & 12.9 & 66.1 & 111.4 \\
            GRESO & 32.8 & 13.2 & 66.3 & {112.3} \\
            PCL   & 33.0 & 14.0 & 66.5 & 113.5 \\
            HIVE  & 32.7 & 12.9 & \best{40.2} & \best{85.8} \\
            \bottomrule
            \end{tabular}
        }
    \end{subtable}
    \vskip -0.2in
\end{table*}

\section{Experiments}
\label{sec:experiments}
\setcounter{topnumber}{2}
% We evaluate HIVE along the three requirements introduced in Section~\ref{sec:method}. First, does HIVE preserve reasoning accuracy while using fewer rollouts? Second, is HIVE cheap enough to be practical? Third, does HIVE continue to reject stale or zero-variance prompts as the learning edge moves?
We evaluate HIVE along the edge-tracking objectives introduced in Section~\ref{sec:method}. First, does HIVE preserve reasoning accuracy while using fewer rollouts? Second, is HIVE cheap enough to be practical? Third, does HIVE keep rejecting stale or low-utility prompts as the learning edge moves?

\subsection{Experimental Settings}
% \best{Models and datasets.} We evaluate Qwen2.5-Math-1.5B/7B~\cite{yang2024qwen25mathtechnicalreportmathematical}, DeepSeek-R1-Distill-Qwen-1.5B~\cite{guo2025deepseek}, Qwen2.5-14B/32B~\cite{qwen2.5-large}, and Llama-3.2-3B-Instruct~\cite{grattafiori2024llama3}. We use a context length of 4K for Qwen2.5-Math-1.5B/7B, 8K for DeepSeek-R1-Distill-Qwen-1.5B and Llama-3.2-3B-Instruct, and 16K for Qwen2.5-14B/32B. We train on two math-reasoning corpora following~\cite{zheng2025act}: DAPO+MATH (DM), which combines DAPO~\cite{yu2025dapo} and MATH~\cite{hendrycks2021measuringmathematicalproblemsolving}, and a 30K subset of Open-R1~\cite{openr1}.
\best{Models and datasets.} We evaluate Qwen2.5-Math-1.5B/7B~\cite{yang2024qwen25mathtechnicalreportmathematical}, DeepSeek-R1-Distill-Qwen-1.5B~\cite{guo2025deepseek}, Qwen3-4B-Base~\cite{yang2025qwen3technicalreport}, Qwen2.5-14B/32B~\cite{qwen2.5-large}, and Llama-3.2-3B-Instruct~\cite{grattafiori2024llama3}. We use a context length of 4K for Qwen2.5-Math-1.5B/7B, 8K for DeepSeek-R1-Distill-Qwen-1.5B, Qwen3-4B-Base, and Llama-3.2-3B-Instruct, and 16K for Qwen2.5-14B/32B. We train on two math-reasoning corpora following~\cite{zheng2025act}: DAPO+MATH (DM), which combines DAPO~\cite{yu2025dapo} and MATH~\cite{hendrycks2021measuringmathematicalproblemsolving}, and a 30K subset of Open-R1~\citet{openr1}.

% \best{Training and evaluation.} HIVE is implemented with verl~\cite{verl2025} and vLLM~\cite{vLLM2023SOSP}. Main experiments use 8$\times$A100-80G GPUs. We evaluate on six mathematical reasoning benchmarks: Math500~\cite{hendrycks2021measuringmathematicalproblemsolving}, AIME24~\cite{aops2024aime}, AMC~\cite{aops2024amc}, Minerva Math~\cite{NEURIPS2022_18abbeef}, Gaokao~\cite{zhang2024gaokao}, and Olympiad Bench~\cite{he-etal-2024-olympiadbench}. Following~\cite{zheng2025act}, we report the checkpoint with the best average score across the six benchmarks. Appendix~\ref{appendix-sec:detailed-experimental-setting} gives full training details.
\best{Training and evaluation.} HßIVE is implemented with verl~\cite{verl2025} and vLLM~\cite{vLLM2023SOSP}. Main experiments use 8$\times$A100-80G GPUs and train each method for 1000 steps. We evaluate on six mathematical reasoning benchmarks: Math500~\cite{hendrycks2021measuringmathematicalproblemsolving}, AIME24~\cite{aops2024aime}, AMC~\cite{aops2024amc}, Minerva Math~\cite{NEURIPS2022_18abbeef}, Gaokao~\cite{zhang2024gaokao}, and Olympiad Bench~\cite{he-etal-2024-olympiadbench}. Following~\cite{zheng2025act}, we report the checkpoint with the best average score on the six benchmarks. Appendix~\ref{appendix-sec:detailed-experimental-setting} gives full training details.

\best{Baselines.} We compare against Dynamic Sampling (DS)~\cite{yu2025dapo}, a rollout-based online filter; Pre-filter~\cite{Gao2025PCL}, a fixed offline prompt filter applied before RL training; GRESO~\cite{zheng2025act}, a history-based selector; and PCL~\cite{Gao2025PCL}, an auxiliary prompt-curriculum selector. Since the original PCL evaluation uses a fixed wall-clock budget, whereas our main protocol fixes training at 1000 steps, we evaluate Pre-filter and PCL with a conservative two-run protocol. For the rollout budget in Table~\ref{tab:main_comparison}, we run each method once using the original PCL settings and once using our common settings, and report the higher average benchmark score. %GRESO, Pre-filter, and PCL are therefore compared under matched rollout budgets, so their differences mainly reflect selection quality rather than rollout volume.

\subsection{Main Results: Accuracy Under Lower Rollout Cost}
\label{subsec:main-experiment}
% \input{sections/figures/introduction}
% Table 2 (Open-R1) and Figure 5 (Qwen2.5-14B/32B) merged into a single
% top-of-page side-by-side float for compactness.
% \documentclass{article}
% \usepackage{booktabs}
% \usepackage{graphicx}
% \usepackage{caption} % 必须引入，用于支持 \captionof

% \begin{document}

\begin{table*}[t]
    \centering
    % ==========================
    % 左侧：表格 (占约 64% 宽度)
    % ==========================
    % \hfill % 中间弹性间距
    % ==========================
    % 右侧：图片 (占约 32% 宽度)
    % ==========================
    \begin{minipage}[c]{0.255\textwidth}
        \centering
        % 插入图片
        \includegraphics[width=\linewidth]{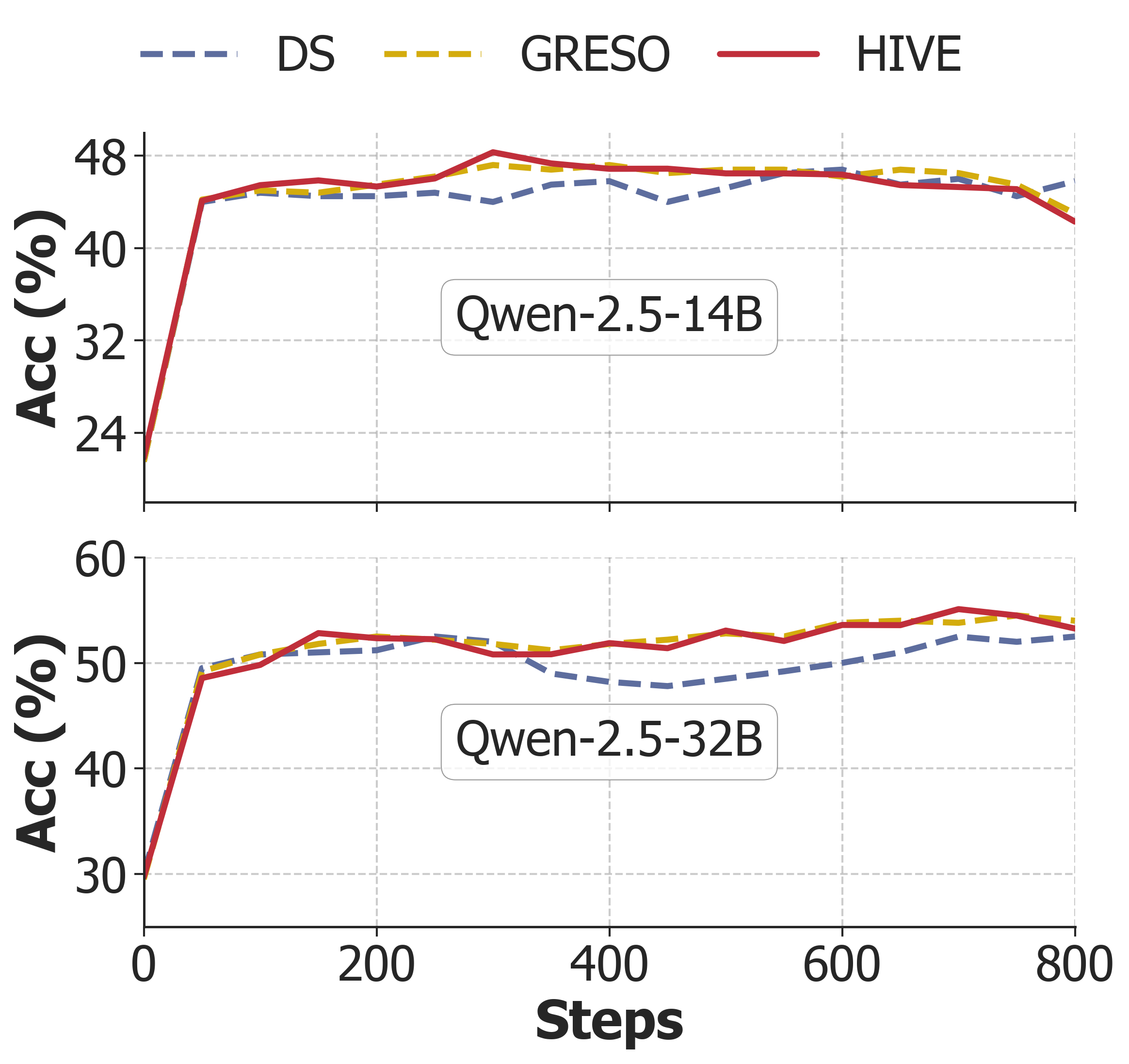}
        % 使用 captionof{figure} 给图片正确编号
        \captionof{figure}{Scaling up to Qwen2.5-14B/32B.}
        \label{fig:scale_14b_32b}
    \end{minipage}
    \hfill
        \begin{minipage}[c]{0.735\textwidth}
        \centering
        \caption{Performance (\%) and rollout comparison of Qwen2.5-Math-1.5B and Qwen2.5-Math-7B trained on Open-R1.}
        \label{tab:openr1-main-exp}
        
        % 使用 resizebox 确保宽表格能完美放入左侧空间
        \resizebox{\linewidth}{!}{
            \begin{tabular}{c|cccccc|c|c}
            \toprule
            {Method} & {Math500} & {AIME24} & {AMC} & {Gaokao} & {Miner.} & {Olymp.} & {Avg.} & {\#Rollout} \\
            \midrule
            
            % Block 1: Qwen2.5-Math-1.5B
            \multicolumn{9}{c}{{\textit{Qwen2.5-Math-1.5B}}} \\
            \midrule
            DS    & 77.1 & 16.7 & 50.3 & 65.5 & 30.9 & 39.7 & 46.7 & 3.8M (1.0$\times$) \\
            GRESO & 76.1 & 20.0 & 50.6 & 65.1 & 30.0 & 39.2 & 46.8 & 1.6M (2.4$\times$)\\
            HIVE  & 76.9 & 18.3 & 48.7 & 65.4 & 30.1 & 40.3 & 46.6 & \best{{1.5M} (2.5$\times$)} \\
            \midrule
            
            % Block 2: Qwen2.5-Math-7B
            \multicolumn{9}{c}{{\textit{Qwen2.5-Math-7B}}} \\
            \midrule
            DS    & 82.8 & 34.2 & 63.5 & 67.3 & 35.7 & 46.3 & 55.0 & 11.4M (1.0$\times$) \\
            GRESO & 82.3 & 35.0 & 64.4 & 66.8 & 36.5 & 45.7 & 55.1 & 3.4M (3.4$\times$) \\
            HIVE  & 82.6 & 36.7 & 63.9 & 66.7 & 36.2 & 45.9 & 55.3 & \best{2.5M (4.7$\times$)} \\
            \bottomrule
            \end{tabular}
        }
    \end{minipage}
    \vskip -0.2in
\end{table*}
% \end{document}

\best{HIVE preserves accuracy while reducing rollouts.}
We compare HIVE against Dynamic Sampling (DS)~\cite{yu2025dapo}, a rollout-based online filter, and GRESO~\cite{zheng2025act}, a history-based selector. Table~\ref{tab:model_performance} reports results on DAPO+MATH across four model families. HIVE consistently uses fewer rollouts than both baselines while maintaining comparable average accuracy. On Qwen2.5-Math-7B, HIVE reduces rollouts by 70\% compared with DS (13.1M to 3.9M) and achieves the highest average accuracy (59.7\% vs. 57.8\% for DS and 58.6\% for GRESO), reaching comparable accuracy with substantially fewer generated responses.

\best{The gains transfer across datasets and model scales.}
Table~\ref{tab:openr1-main-exp} evaluates Qwen2.5-Math-1.5B/7B on Open-R1. HIVE again keeps comparable or better average accuracy while reducing rollouts, reaching 4.7$\times$ fewer rollouts than DS on Qwen2.5-Math-7B. Figure~\ref{fig:scale_14b_32b} extends the comparison to Qwen2.5-14B and Qwen2.5-32B. HIVE follows a similar convergence trajectory to DS and GRESO while requiring fewer rollouts; for Qwen2.5-14B over 800 steps, HIVE uses 1.56M rollouts compared with 1.74M for GRESO and 3.45M for DS.

\best{Wall-clock savings come from rollout reduction, not hidden overhead.}
Table~\ref{tab:efficiency_cost} decomposes end-to-end training time. HIVE reduces rollout time by up to 3.8$\times$ and total training time by up to 2.3$\times$ over DS. On Qwen2.5-Math-7B, total training time drops from 198.4 hours (DS) and 112.3 hours (GRESO) to 85.8 hours (HIVE), while rollout time drops from 153.7 and 66.3 to 40.2.

\begin{wraptable}[10]{r}{0.4\textwidth}
    \vspace{-1.2em}
    % \vskip -0.2in
    \centering
    \caption{Performance comparison (\%) on tool calling tasks.}
    \label{tab:tool_calling}
    \small
    \setlength{\tabcolsep}{3.2pt}
    \renewcommand{\arraystretch}{0.92}
    \begin{tabular}{lcc}
        \toprule
        Method & BFCL-V2 & \#Rollout \\
        \midrule
        DS & 79.08 & 5.3M \\
        Pre-filter & 77.04 & 1.4M \\
        GRESO & 78.96 & 1.4M \\
        PCL & 76.84 & 1.4M \\
        HIVE & \best{79.26} & \best{1.1M} \\
        \bottomrule
    \end{tabular}
    \vspace{-1.2em}
    % \vskip -0.2in
\end{wraptable}
\noindent\best{Performance on tool calling tasks.} Besides, we also verify the effectiveness of HIVE on tool calling tasks. We train Qwen2.5-1.5B on 80\% prompts samples from BFCL-V2~\cite{gorilla-openfunctions-v2} and evaluate the performance on the remaining 20\% subset~\cite{guo2025sample}. As shown in Table~\ref{tab:tool_calling}, HIVE achieves the highest BFCL-V2 score, slightly exceeding DS (79.26 vs. 79.08) while using only 1.1M rollouts instead of 5.3M. This corresponds to a 4.8$\times$ rollout reduction relative to DS. Under the same low-rollout regime, HIVE also outperforms the static Pre-filter baseline, the history-only GRESO selector, and the auxiliary PCL selector. These results suggest that moving-edge tracking is not only specific to mathematical reward signals, but also applicable on tool-calling tasks, where HIVE continues to concentrate rollout compute on informative prompt.

\subsection{Mechanism Analysis: Efficiency and Moving-Edge Adaptation}
\label{subsec:further-experiment}
\begin{figure*}[t]
    \centering
        % 第四张图：Zero-Variance Ratio
    \begin{subfigure}[b]{0.24\textwidth}
        \centering
        \includegraphics[width=\textwidth]{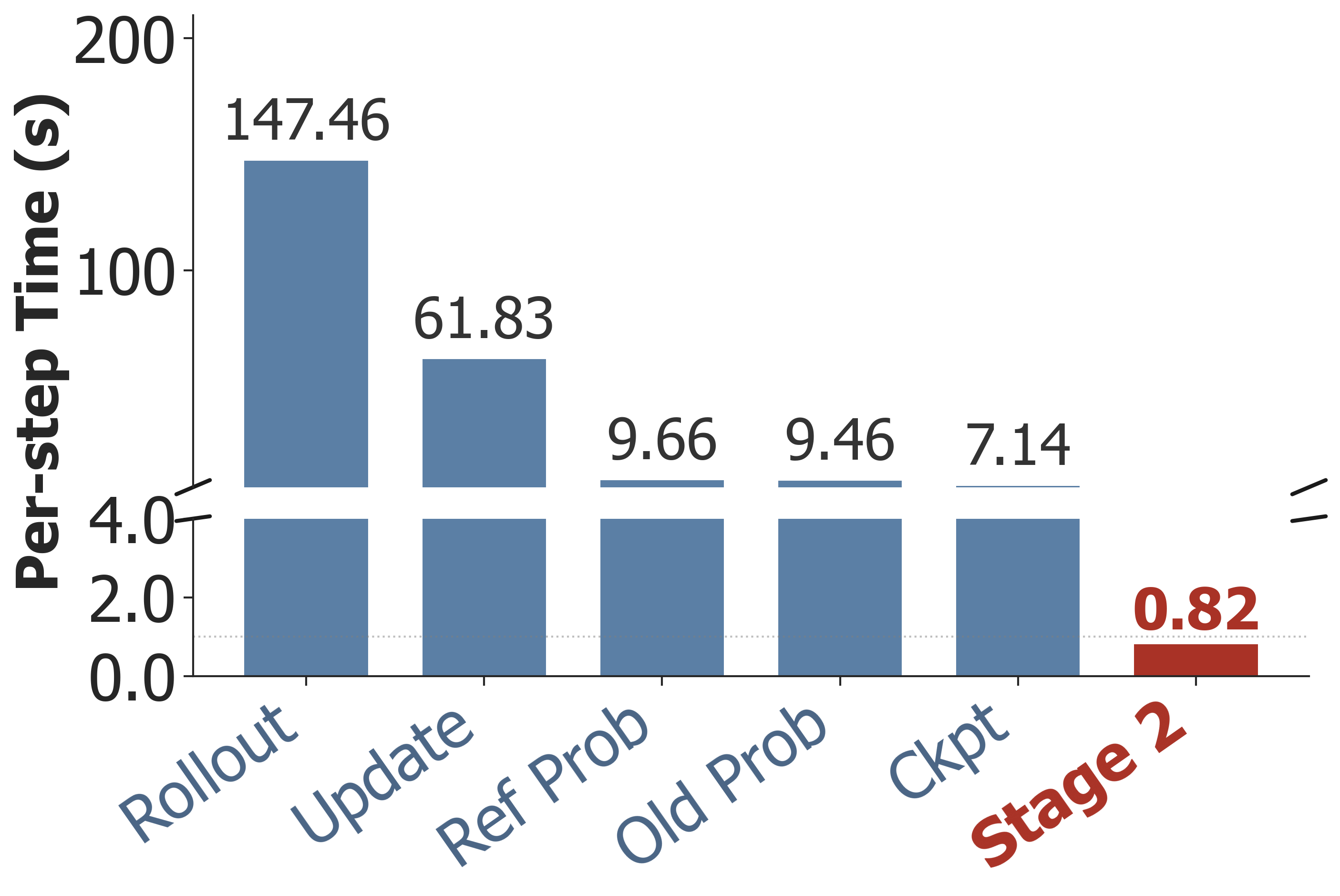}
        \caption{{Train-Time Breakdown}}
        \label{fig:per_step_time_analysis}
    \end{subfigure}
    % 第一张图：Rollout vs Time
    \begin{subfigure}[b]{0.24\textwidth}
        \centering
        \includegraphics[width=\textwidth]{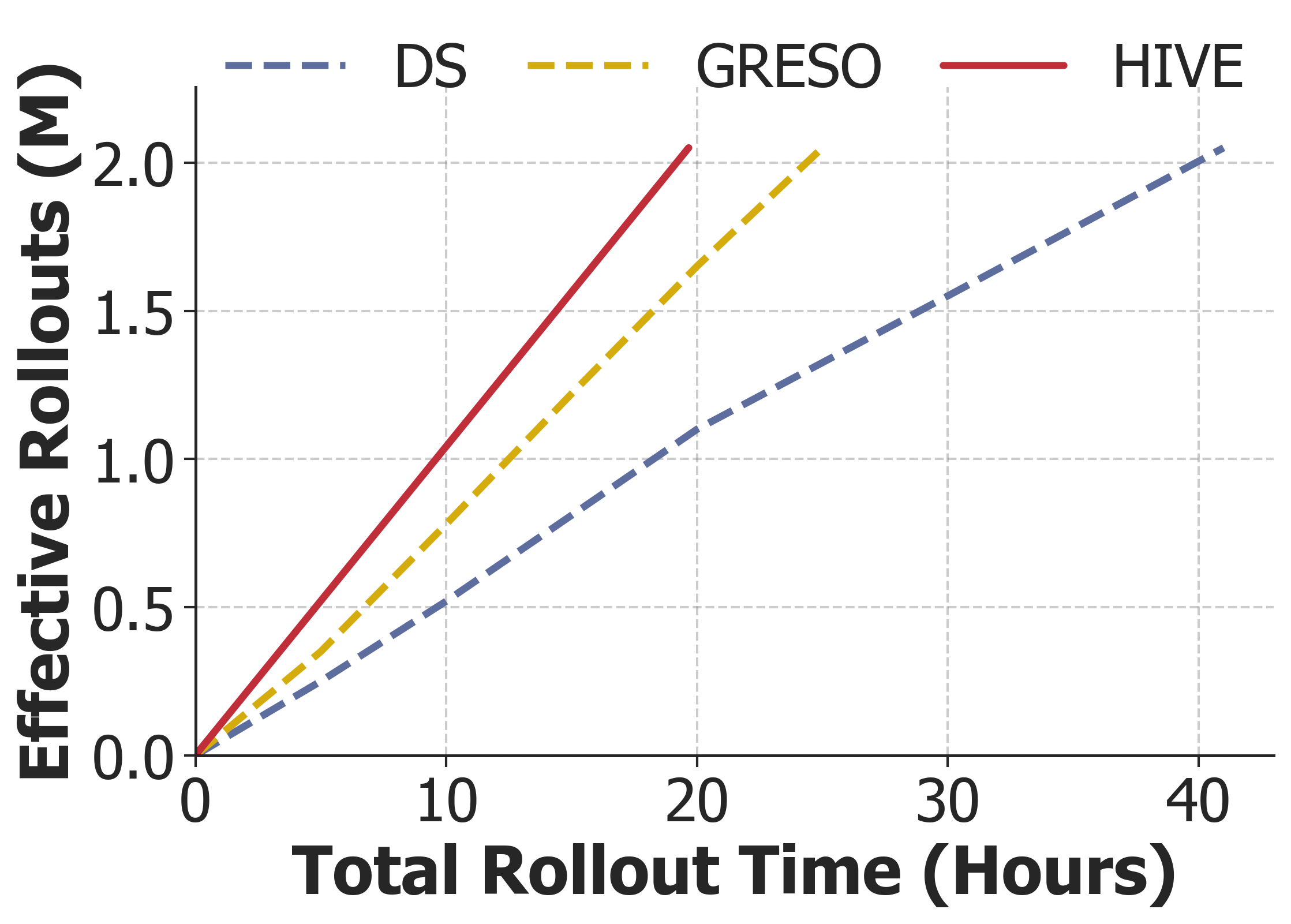}
        \caption{Efficiency Comparison}
        \label{fig:total_rollout}
    \end{subfigure}
    \hfill
    % 第二张图：Generation Time per Step
    \begin{subfigure}[b]{0.24\textwidth}
        \centering
        \includegraphics[width=\textwidth]{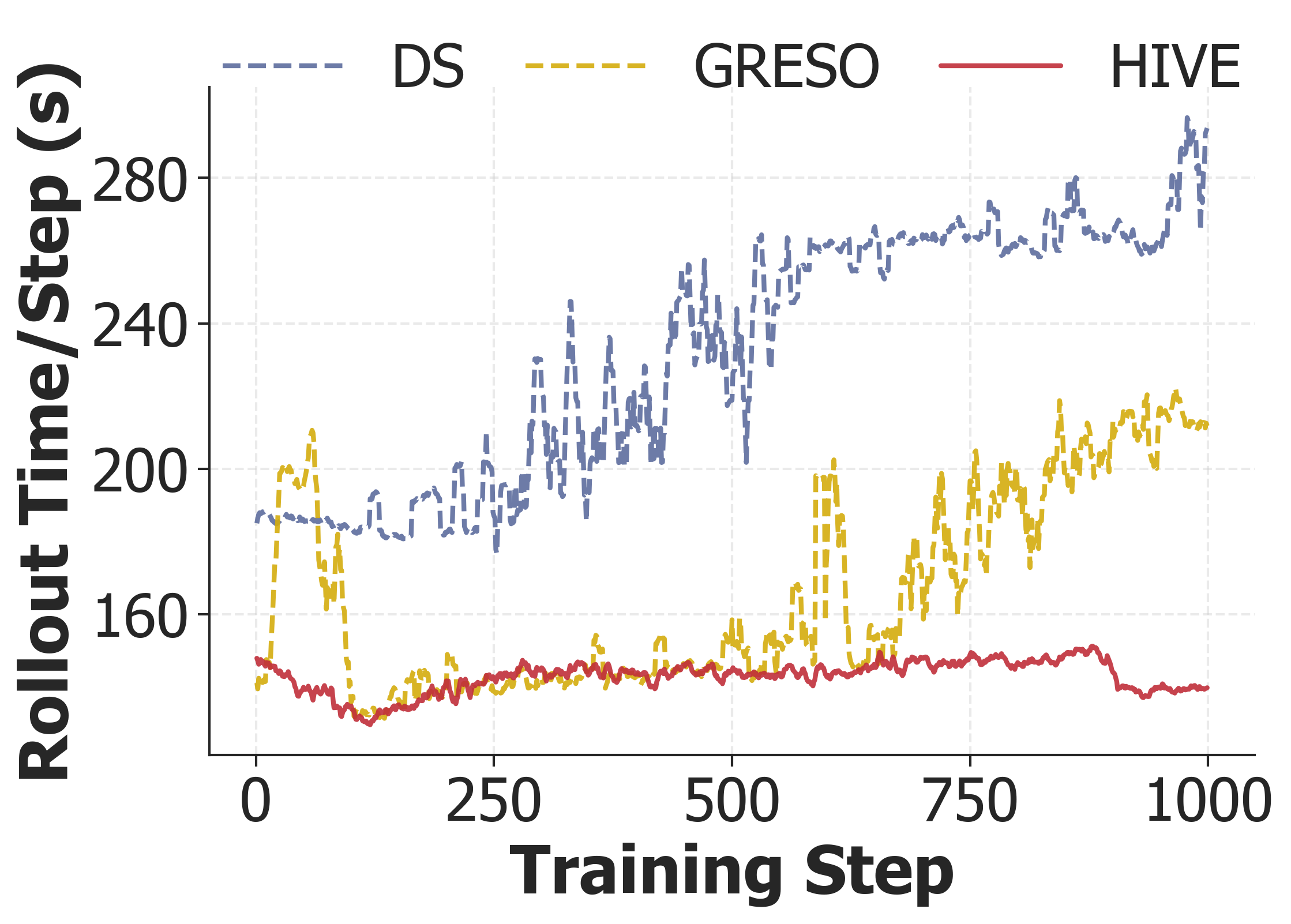}
        \caption{Generation Time}
        \label{fig:gen_time}
    \end{subfigure}
    \hfill
    % 第三张图：Rollouts per Step
    \begin{subfigure}[b]{0.24\textwidth}
        \centering
        \includegraphics[width=\textwidth]{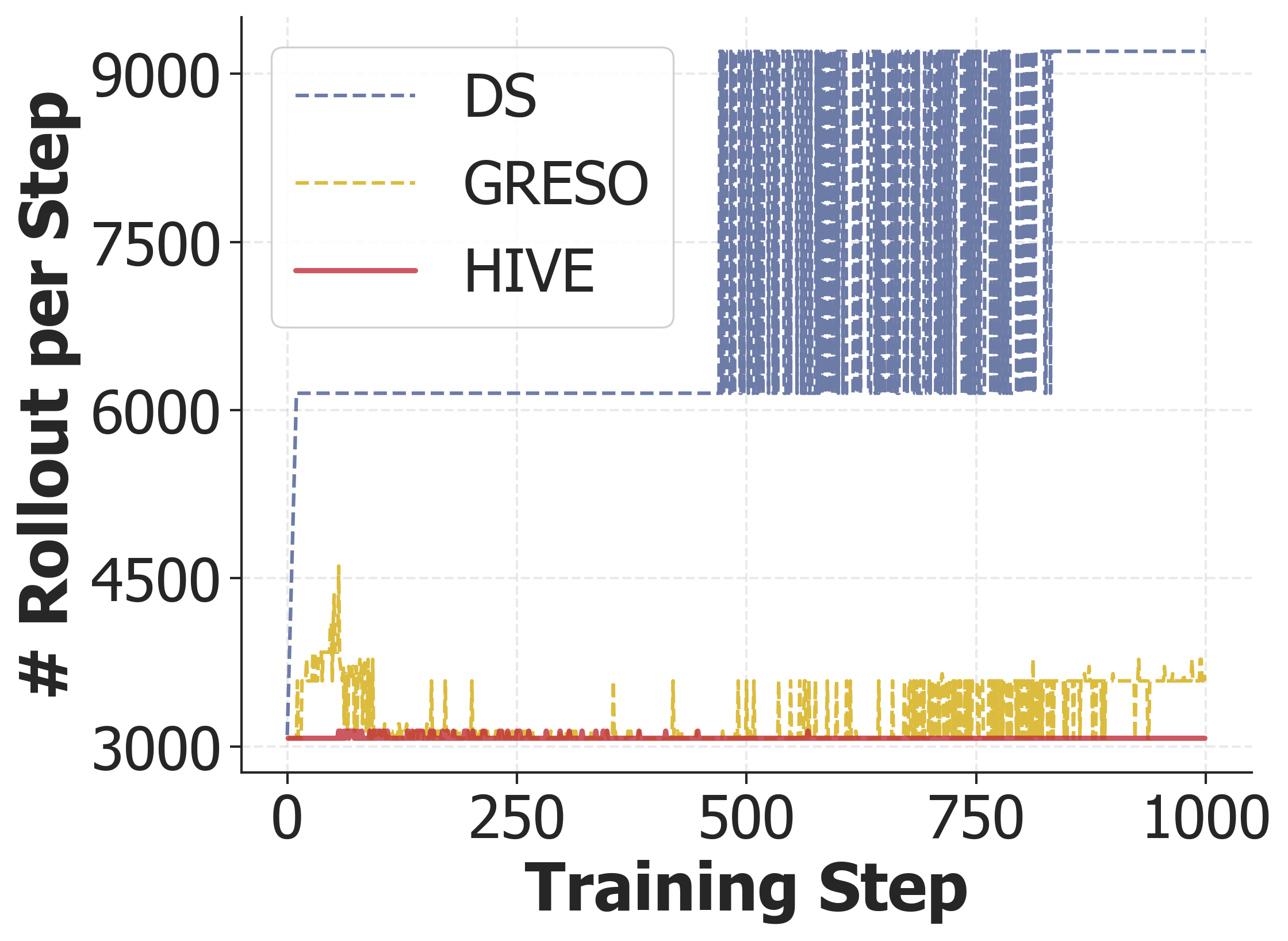}
        \caption{Rollouts per Step}
        \label{fig:rollout_step}
    \end{subfigure}
    \hfill
    
    \caption{Efficiency analysis of Qwen2.5-Math-1.5B trained on DAPO+MATH. (a) Stage~2 prompt-side verification adds little time relative to rollout generation. (b) HIVE accumulates effective rollouts fastest per rollout hour. (c) HIVE maintains lower generation latency throughout training. (d) HIVE rolls out fewer prompt groups per step, concentrating compute on verified candidates.}
    \label{fig:further-analysis}
\end{figure*}

\best{Online verification is cheap enough to run before rollouts.}
Figure~\ref{fig:per_step_time_analysis} isolates the per-step overhead of Stage~2. Computing prompt-side entropy adds only 0.82 seconds, less than 0.4\% of the iteration time, while the rollout phase alone takes 147.46 seconds. This supports the efficiency side of the triad: HIVE gains online information without paying for response generation during selection.

\best{HIVE improves effective rollout throughput.}
Figure~\ref{fig:total_rollout} shows that HIVE accumulates effective rollouts per hour fastest. Figure~\ref{fig:gen_time} further shows that HIVE maintains low rollout-generation latency throughout training, while GRESO becomes slower later in training. This is consistent with metadata staleness: a history-only selector increasingly admits prompts that look useful in the log but no longer produce efficient current updates. Figure~\ref{fig:rollout_step} shows that HIVE also generates fewer rollout groups per step, concentrating compute on a smaller set of verified prompts near the current edge.

\begin{figure*}[t]
    \centering
    \begin{subfigure}[b]{0.24\textwidth}
        \centering
        \includegraphics[width=\textwidth,trim=0 0 440.4bp 0,clip]{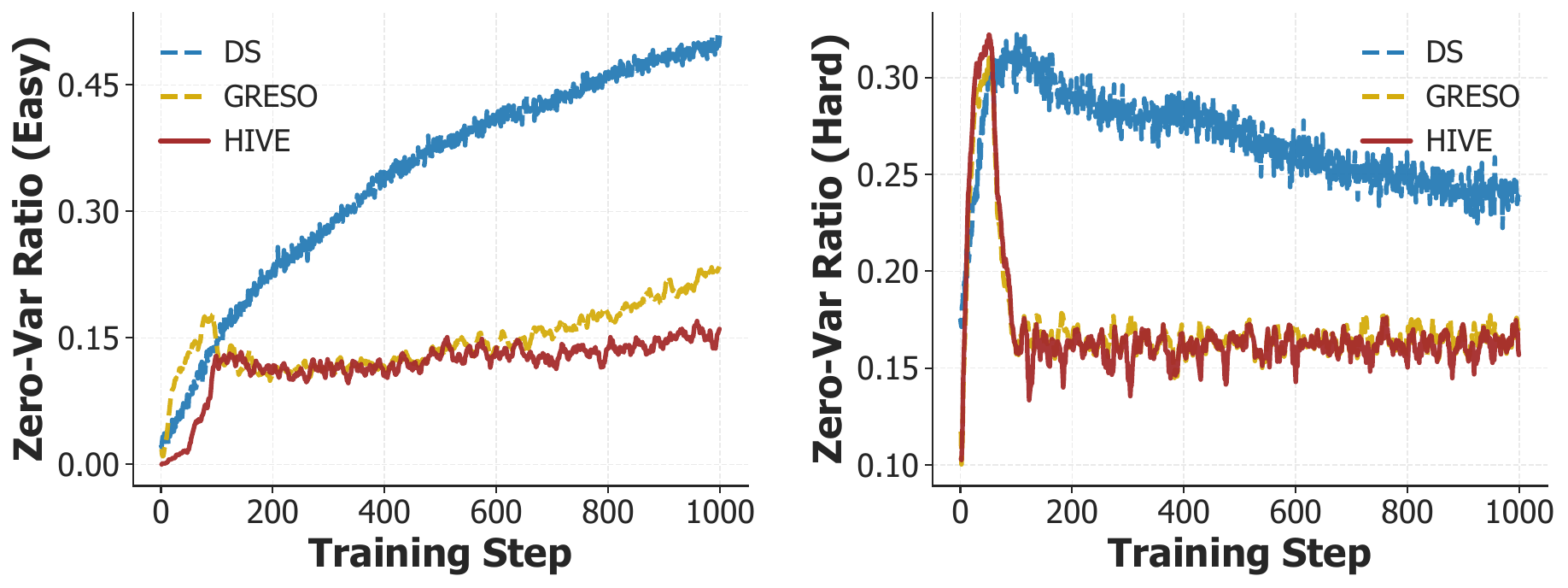}
        \caption{Easy zero-variance}
        \label{fig:zerovar_easy}
    \end{subfigure}
    \hfill
    \begin{subfigure}[b]{0.24\textwidth}
        \centering
        \includegraphics[width=\textwidth,trim=440.4bp 0 0 0,clip]{figures/experiment/fig6-zerovar-easy-hard-polished.pdf}
        \caption{Hard zero-variance}
        \label{fig:zerovar_hard}
    \end{subfigure}
    \hfill
    \begin{subfigure}[b]{0.24\textwidth}
        \centering
        \includegraphics[width=\textwidth]{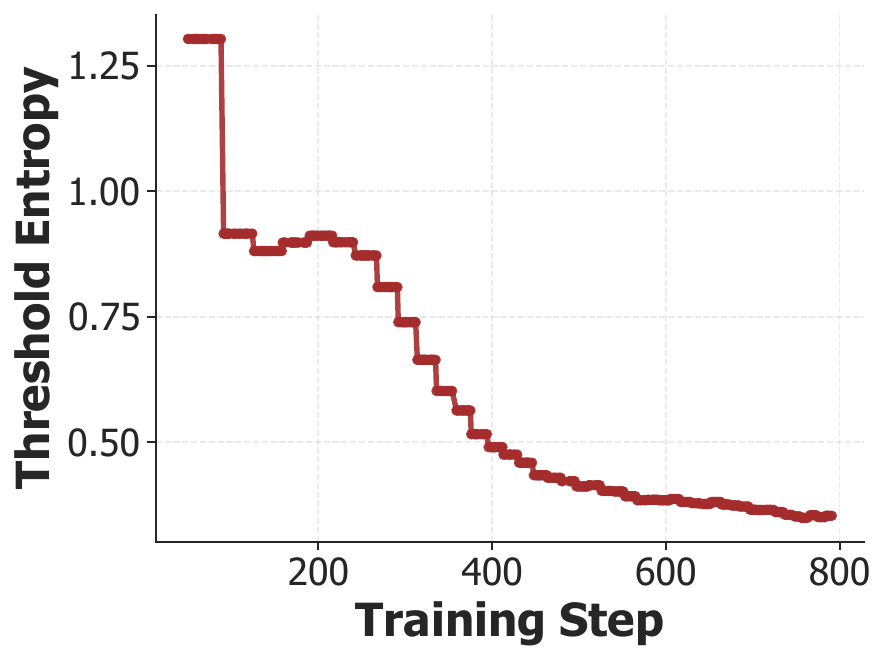}
        \caption{Entropy threshold}
        \label{fig:gamma_entropy}
    \end{subfigure}
    \hfill
    \begin{subfigure}[b]{0.24\textwidth}
        \centering
        \includegraphics[width=\textwidth]{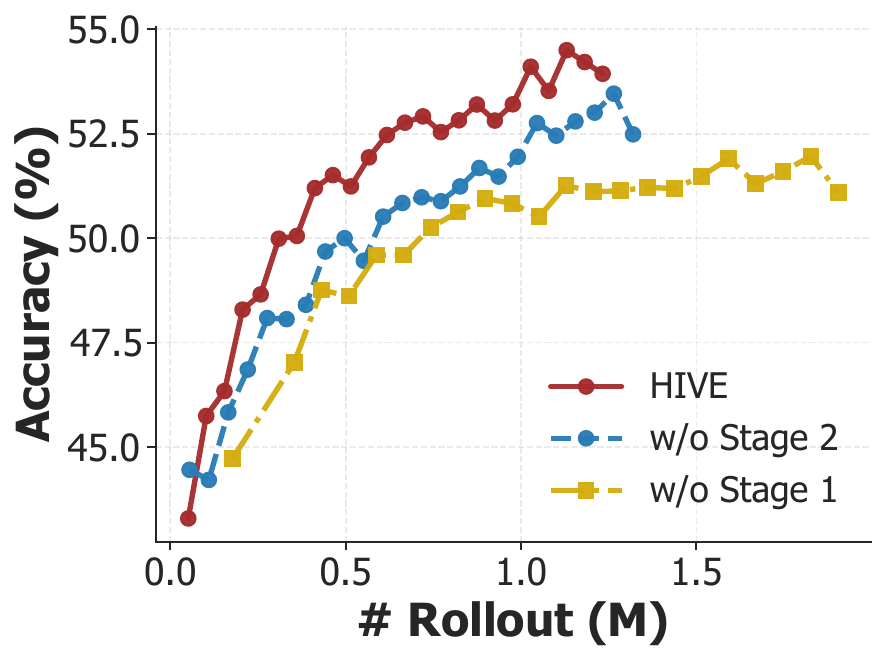}
        \caption{Stage ablation}
        \label{fig:ablation}
    \end{subfigure}

    \caption{\textbf{Zero-variance ratios and component analysis.}
    (a)~Easy zero-variance ratio: HIVE keeps the lowest fraction of all-correct prompts throughout training.
    (b)~Hard zero-variance ratio: HIVE rapidly drops below DS and tracks GRESO.
    (c)~The Stage~2 median prompt-entropy threshold $\gamma$ adapts as the policy becomes more confident.
    (d)~Removing either Stage~1 or Stage~2 worsens the accuracy--rollout tradeoff.}%
    \label{fig:non-zero-diveregence-comparison}
    \vskip -0.08in
\end{figure*}

\best{HIVE rejects stale zero-variance prompts as training evolves.}
The zero-variance ratio measures how often selected prompts produce no informative gradient. Figures~\ref{fig:zerovar_easy} and~\ref{fig:zerovar_hard} show that DS wastes a large fraction of rollouts on easy zero-variance prompts as the model improves, while GRESO remains vulnerable to stale historical records. HIVE maintains a lower zero-variance ratio for both easy and hard prompts, indicating that online verification helps track the moving edge rather than relying on outdated metadata.

\subsection{Component Study: Why Both Stages Matter}
\label{subsec:exp-component-study}

\best{The entropy gate adapts with the current policy.}
Figure~\ref{fig:gamma_entropy} tracks the Stage~2 median threshold $\gamma$, which decreases as the model becomes more confident. This behavior is expected: as the policy improves, the prompt-side uncertainty distribution shifts, and a fixed threshold would either over-prune or admit stale prompts. The median gate keeps the verification rule relative to the current candidate pool. 
\best{The two-stage decomposition is necessary.}
Figure~\ref{fig:ablation} compares the full HIVE pipeline with variants that remove either stage. Without Stage~2, HIVE loses online precision and becomes more vulnerable to stale metadata. Without Stage~1, the verifier receives too many low-value candidates, reducing efficiency. The full method gives the best accuracy--rollout tradeoff, supporting the decomposition into efficient historical narrowing followed by online verification.

\section{Related Work}
\label{sec:related-work}

\textbf{RLVR for LLM reasoning.}
Reinforcement learning is a standard post-training tool for aligning and improving LLMs~\cite{christiano2017deep,bai2022training,ouyang2022training,rafailov2023direct,yang2026seeing}. Recent reasoning systems increasingly rely on reinforcement learning with verifiable rewards (RLVR), where correctness can be checked directly for tasks such as mathematics and code~\cite{guo2025deepseek,hu2025reinforce++,liu2503understanding,jaech2024openai,Zhang2025SRPOAC}. Algorithmic work improves this training loop through value-enhanced PPO variants such as VC-PPO~\cite{yuan2025s} and VAPO~\cite{yue2025vapo}, group-based methods such as RLOO~\cite{ahmadian2024back} and GRPO~\cite{shao2024deepseekmath}, replay-based updates~\cite{liang2025squeeze,li2025repo,zhang2025rlep}, and objective modifications such as Dr. GRPO~\cite{liu2503understanding} and GSPO~\cite{zheng2025group}. These methods improve optimization stability or sample reuse, but they do not directly solve the pre-rollout allocation problem studied here: which prompts deserve expensive response generation at the current training step.

\textbf{Prompt selection and data-efficient RL training.}
Data curation is essential for efficient LLM training. Offline filtering methods rank prompts before training using quality, difficulty, diversity, or scale heuristics~\cite{Wang2025ReinforcementLF,Fatemi2025ConciseRV,Li2025LIMRLI,shi2025efficient,tang2025towards,zhou2023lima,ye2025limo}. They are efficient once training begins, but they are policy-agnostic and cannot adapt as the model's competence changes. Rollout-based online methods, including Dynamic Sampling in DAPO~\cite{yu2025dapo} and related online down-sampling or difficulty-filtering approaches~\cite{meng2025mm,foster2025learning,xu2025not,lin2025cppo,sun2025efficient,bae2025online}, are more precise because they observe current rewards, but they require the rollouts they aim to save. History-based and estimator-based approaches such as GRESO~\cite{zheng2025act}, Bayesian task selection~\cite{chen2025self,zeng2025cures,shen2025bots}, and curriculum estimators use past trajectories or uncertainty models to reduce selection cost, but their estimates can become stale in a changing policy environment. Auxiliary-model methods such as DOTS~\cite{sun2025dots} and PCL~\cite{Gao2025PCL} add external or on-policy predictors for difficulty, improving adaptivity at the cost of additional compute or memory.

Overall, prior methods face limitations in online and efficient prompt selection. HIVE differs by explicitly decomposing the problem into history-informed coarse narrowing and prompt-side online verification, allowing it to retain the efficiency of historical filtering while correcting stale metadata before rollout. Detailed related work is provided in Appendix~\ref{appendix-sec:detailed_related_work}.

\section{Conclusion}
RLVR prompt selection is best understood as a moving-edge tracking problem. Useful prompts are not merely those that look difficult or uncertain in isolation; they are prompts that remain informative for the current policy. HIVE efficiently identifies those prompts with a dual-stage mechanism: history-informed narrowing reduces the candidate pool, and prompt-side online verification filters stale candidates under the current model. Across math reasoning benchmarks and model scales, HIVE reduces rollout and training cost while preserving or improving final accuracy.

\section{Limitations and Broader Impacts}
\label{sec:limitations}
% \red{re-write Limitations. }
% Our evaluation does not include prompt-side verification to multimodal RLVR. HIVE also introduces hyperparameters such as $\lambda$, $\Delta p$, and the target zero-variance ratio $\alpha$; although the adaptive mechanism reduces manual tuning, we have not exhaustively searched these settings across all model scales. Finally, due to the cost of large-scale RLVR, the current main results do not include repeated-run error bars or oracle high-gradient precision/recall measurements. We instead report rollout cost, zero-variance ratios, prompt--response entropy correlation, and component ablations as mechanism evidence.
% \section*{Impact Statements}
% \red{need more negilible limitations}. This paper presents work whose goal is to advance the field of machine learning. HIVE has the potential to improve the data and computational efficiency of reinforcement learning for large reasoning models. By reducing unnecessary rollouts, it can lower the energy cost and carbon footprint of RL post-training. Lower compute requirements may also make advanced model fine-tuning more accessible to academic groups with limited resources. We do not foresee specific negative societal consequences beyond the general risks associated with improving LLM reasoning capabilities.
\paragraph{Limitations.} 
% The rank-consistency analysis provides a conservative sufficient condition for prompt-side entropy to serve as an online verifier, rather than a tight guarantee for all individual prompt pairs. Its worst-case concentration arguments may lead to loose margins for modern LRMs. This limitation does not affect HIVE's main contribution or empirical conclusions, since HIVE only requires prompt-side entropy to act as a useful low-cost signal for filtering stale candidates, which is supported by our entropy analysis, ablations, and end-to-end efficiency gains. Deriving tighter variance-dependent guarantees is left for future work.
Our rank-consistency analysis is intended as a conservative sufficient condition for using prompt-side entropy as an online verifier, rather than a tight characterization of every empirical prompt pair. The proof uses worst-case concentration and bounded-approximation arguments, which may yield loose margins for modern LRMs. This does not affect HIVE's algorithmic design or empirical conclusions, as the method does not rely on exact prompt-level ranking; it only requires prompt-side entropy to provide a useful low-cost signal for filtering stale candidates. This role is validated by our prompt--response entropy analysis, component ablations, and end-to-end efficiency gains. Tighter variance-dependent guarantees remain an interesting direction for future work.
\paragraph{Broader impacts.} 
HIVE improves the efficiency of RL post-training by reducing unnecessary rollouts, thereby lowering training cost, energy consumption, and carbon footprint. This may make RLVR-style post-training more accessible to resource-constrained research groups. We do not foresee specific negative societal impacts beyond the general risks associated with stronger LLM capabilities.
% HIVE aims to improve the computational efficiency of RL post-training for large reasoning models. By reducing unnecessary rollouts, it can lower training cost, energy consumption, and carbon footprint, while making RLVR-style post-training more accessible to resource-constrained research groups. We do not foresee specific negative societal impacts beyond the general risks associated with stronger LLM reasoning capabilities
% \red{need check if the fake references exists.}
% \red{examine the checklist}

\bibliographystyle{plainnat}
\bibliography{references}

\clearpage
\appendix
\section*{Appendix}
In this appendix, we provide the rank-consistency analysis for Theorem~\ref{thm:ranking-informal} (Appendix~\ref{appendix-sec:proof-thm}), experimental settings (Appendix~\ref{appendix-sec:detailed-experimental-setting}), preliminary-study details (Appendix~\ref{sec:preliminary-setup}), detailed related work (Appendix~\ref{appendix-sec:detailed_related_work}), additional experiments (Appendix~\ref{appendix-sec:additional-exp}), the HIVE algorithm (Appendix~\ref{appendix-sec:alg}), and asset liciences (Appendix~\ref{sec:asset-licenses}).

% \red{Need to add more details on the experiment settings.}
\section{Rank-Consistency Analysis for Theorem~\ref{thm:ranking-informal}}
\label{appendix-sec:proof-thm}
This section provides a rank-consistency justification for using prompt-side entropy $V(x)$ as a low-cost proxy for response-side entropy $U^*(x)$. The result is meant to formalize the intuition behind Stage~2 under simplifying assumptions; the main empirical support for HIVE comes from the correlation and ablation results reported in the paper.

\subsection{Preliminaries and Notations}
Let the fixed model parameters be $\thetahat$ and the vocabulary size be $|\Vcal|$. We denote the sampling policy used for generation as $q(\cdot|x)$ (e.g., temperature sampling with parameter $\tau$).

\paragraph{Observable: Prompt-side Entropy.} 
Given a prompt sequence $x = (x_1, \dots, x_L)$, the model computes the probability distribution over the vocabulary at each position. Let $p_{\thetahat, \tau}(\cdot|x_{<l}) = \text{softmax}(z_{\thetahat}(x_{<l})/\tau)$ denote the temperature-scaled probability. The observable token entropy at position $l$ is $e_l(x) := \mathcal{H}(p_{\thetahat, \tau}(\cdot|x_{<l}))$. We define the aggregated prompt entropy as:
\begin{equation}
  V(x) := \frac{1}{L-1}\sum_{l=2}^{L} e_l(x).
\end{equation}
% Note that $V(x)$ is fully deterministic given the model and the prompt.

\paragraph{Observable: Response-side Entropy.} 
For a given prompt $x$, we generate $G$ independent rollouts to estimate the output entropy. For the $r$-th rollout, let the response be $y^{(r)}_{1:L_r}$. Define the step context as $c_t^{(r)} := (x, y_{<t}^{(r)})$. The token entropy at step $t$ is $u_t^{(r)}(x) := \mathcal{H}(p_{\thetahat, \tau}(\cdot|c_t^{(r)}))$. 
The length-normalized entropy for rollout $r$ is $U^{(r)}(x) := \frac{1}{L_r}\sum_{t=1}^{L_r} u_t^{(r)}(x)$.
The final empirical estimator is the average over $n$ rollouts:
\begin{equation}
  \hat{U}(x) := \frac{1}{n}\sum_{r=1}^{n} U^{(r)}(x).
\end{equation}
We define the target $U^*(x) := \mathbb{E}_{y \sim q(\cdot|x)}[U^{(r)}(x)]$ as the expected entropy under the sampling policy.

\subsection{Theoretical Bridges and Assumptions}
\label{subsec:thm-theoretical-proof}
To bridge the gap between the computationally tractable token-level logits entropy and the model's high-level semantic entropy, we establish our theoretical analysis upon two pivotal assumptions. First, we map the observable token entropy to the model's internal representation entropy (\textit{Representation Approximation}). Second, we posit that this internal entropy propagates linearly from the prompt to the response (\textit{Entropy Propagation}).

\subsubsection{From Token to Representation.}
While observable entropies $V(x)$ and $U^*(x)$ are derived from the final output layer, true epistemic entropy is often better encoded in the latent space. Following \cite{flue2025}, we define the \textit{Representation Entropy} $s(c)$ as the entropy of hidden states at a deep layer $b \approx N$, formally $s(c) \approx \hat{\mathcal{H}}[H^{(b)}|c]$. We define the aggregated representation entropies for the prompt and response as:
\begin{equation}
\begin{split}
    S_{\text{prompt}}(x) := \frac{1}{L-1}\sum_{l=2}^L s(x_{<l}),\;
    S_{\text{resp}}(x) := \mathbb{E}_{y} \left[\frac{1}{T}\sum_{t=1}^{T} s(c_t)\right].
\end{split}
\end{equation}

Based on this definition, we introduce our first assumption to link observables to internal states.

\begin{assumption}[Representation Approximation]
\label{ass:representation}
The observable token entropy approximates the internal representation entropy up to a small residual $\delta$:
\begin{equation}
  |V(x) - S_{\text{prompt}}(x)| \le \delta \quad \text{and} \quad |U^*(x) - S_{\text{resp}}(x)| \le \delta.
\end{equation}
\end{assumption}
\noindent\textit{{Justification (Theoretical):}} This assumption is grounded in Proposition 1 of \cite{flue2025}, which proves that for trained LLMs, the entropy of hidden states in deep layers serves as an approximate upper bound for the predictive posterior entropy (i.e., our token entropy). As the layer depth $b \to N$, the mutual information is maximized, ensuring the token entropy is tightly correlated with the internal representation.

% \subsubsection{Entropy Propagation.}
% Having established the link to internal representations, we next model how this entropy evolves from the input context to the generation phase.

% \begin{assumption}[Entropy Propagation]
% \label{ass:propagation}
% The representation entropy in the prompt propagates linearly to the response generation stage. There exist constants $a > 0, b \in \mathbb{R}$ such that:
% \begin{equation}
%   |S_{\text{resp}}(x) - (a S_{\text{prompt}}(x) + b)| \le \epsilon.
% \end{equation}
% \end{assumption}

% \input{sections/figures/appendix-scatter_mean_respRatio15-beauty}
\noindent % 防止首行缩进造成未对齐
\subsubsection{Entropy Propagation.}

% --- 开始 Wrapfigure 环境 ---
% {r} 表示靠右，{0.45\textwidth} 是图片区域宽度，可根据需要调整
\begin{wrapfigure}{r}{0.45\textwidth} 
    \vspace{-20pt} % 【调整核心】向上移动图片，使其与左侧标题/正文顶部对齐
    \centering
    
    % 这里插入你的图片内容
    % 注意：被 input 的文件内不能有 \begin{figure} 环境，只能有纯图片代码
    \input{sections/figures/appendix-scatter_mean_respRatio15-beauty}
    
    % 如果你的 input 文件里没有 caption，请在这里加：
    % \caption{Validation of Entropy Propagation...}
    % \label{fig:entropy_correlation}
    
    \vspace{-10pt} % 减少图片下方的空白
\end{wrapfigure}
% --- 结束 Wrapfigure 环境 ---

% 左侧的文字会自动填充在图片左边
Having established the link to internal representations, we next model how this entropy evolves from the input context to the generation phase.

\begin{assumption}[Entropy Propagation]
\label{ass:propagation}
The representation entropy in the prompt propagates linearly to the response generation stage. There exist constants $a > 0, b \in \mathbb{R}$ such that:
\begin{equation}
  |S_{\text{resp}}(x) - (a S_{\text{prompt}}(x) + b)| \le \epsilon.
\end{equation}
\end{assumption}

% 当文字长度超过图片高度时，下方的文字会自动变宽，填满图片下方的空白
\noindent\textit{{Justification (Empirical):}} To examine this linearity and quantify the noise $\epsilon$, we analyzed 2,048 prompts. We computed $V(x)$ as the proxy for $S_{\text{prompt}}$ (per Assumption~\ref{ass:representation}). %For the response side, to capture effective semantic entropy and avoid dilution by low-entropy functional tokens, we applied a \textit{High-Entropy Filtering} strategy (averaging the top 15\% entropy tokens across 8 rollouts). This strategy aligns with findings in~\cite{wang2025beyond}, who demonstrates that reasoning is driven by a ``high-entropy minority" of tokens (``forks"). 
To mitigate the intrinsic stochasticity of LLM generation, we use a binned analysis. As shown in Figure~\ref{fig:entropy_correlation}, prompt-side and response-side entropy have a strong positive correlation (Pearson $r$=0.9600). The binned mean trend is well approximated by a line with $R^2=0.9216$, supporting the linear structure in Assumption~\ref{ass:propagation} for the regimes studied here.

\subsection{Concentration Bound}
\begin{lemma}[High-probability Concentration]
\label{lemma:concentration}
For any fixed prompt $x$ and tolerance $\eta > 0$, the empirical entropy $\hat{U}(x)$ concentrates around the true expectation $U^*(x)$:
\begin{equation}
  \Pr\left(|\hat{U}(x) - U^*(x)| > \eta\right) \le 2 \exp\left( - \frac{2n\eta^2}{(\log|\Vcal|)^2} \right).
  \label{eq:concentration_prob}
\end{equation}
Consequently, for any confidence level $1-\alpha$ (where $\alpha \in (0, 1)$), with probability at least $1-\alpha$:
\begin{equation}
  |\hat{U}(x) - U^*(x)| \le \eta(\alpha) := \log|\Vcal| \sqrt{\frac{\log(2/\alpha)}{2n}}.
  \label{eq:concentration_bound}
\end{equation}
\end{lemma}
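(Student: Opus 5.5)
The plan is to apply Hoeffding's inequality to the $n$ per-rollout quantities $U^{(1)}(x),\dots,U^{(n)}(x)$. Fix the prompt $x$ and the parameters $\thetahat$. Each rollout $y^{(r)}$ is drawn independently from $q(\cdot\mid x)$. Therefore the random variables $U^{(r)}(x)$, which are deterministic functions of $y^{(r)}$ given $x$ and $\thetahat$, are i.i.d. Their common mean is $\mathbb{E}_{y\sim q(\cdot\mid x)}[U^{(r)}(x)]=U^*(x)$ by definition. Hence $\hat U(x)$ is an empirical mean of i.i.d. variables centered at $U^*(x)$.

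The first substantive step is a deterministic range bound. For any distribution $p$ on $\Vcal$, the Shannon entropy satisfies $0\le\mathcal{H}(p)\le\log|\Vcal|$, with the maximum attained at the uniform distribution. This follows from Jensen's inequality applied to the concave $\log$. So every token entropy $u_t^{(r)}(x)=\mathcal{H}(p_{\thetahat,\tau}(\cdot\mid c_t^{(r)}))$ lies in $[0,\log|\Vcal|]$. The length-normalized average $U^{(r)}(x)=\frac1{L_r}\sum_t u_t^{(r)}(x)$ is a convex combination of such values, so it also lies in $[0,\log|\Vcal|]$.

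This step deserves care. The length $L_r$ is itself random and depends on the sampled tokens, so $U^{(r)}(x)$ is not a sum of independent terms. The argument avoids this by treating $U^{(r)}(x)$ as a single bounded function of the whole rollout. All that is used is that the convex-combination bound holds for every realization, together with independence across rollouts $r$; no independence within a rollout is needed. I regard this as the main point to state carefully rather than a real obstacle. I would also make explicit the implicit assumption that $L_r\ge1$, so that the average is well defined.

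With values in an interval of width $\log|\Vcal|$, Hoeffding's inequality gives
\begin{equation*}
\Pr\left(|\hat U(x)-U^*(x)|>\eta\right)\le 2\exp\left(-\frac{2n^2\eta^2}{n(\log|\Vcal|)^2}\right)=2\exp\left(-\frac{2n\eta^2}{(\log|\Vcal|)^2}\right),
\end{equation*}
which is \eqref{eq:concentration_prob}. For \eqref{eq:concentration_bound}, set the right-hand side equal to $\alpha$ and solve for $\eta$. This gives $\eta(\alpha)=\log|\Vcal|\sqrt{\log(2/\alpha)/(2n)}$, so the complementary event $|\hat U(x)-U^*(x)|\le\eta(\alpha)$ has probability at least $1-\alpha$.
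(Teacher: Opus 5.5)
Your proposal is correct and follows the same route as the paper. The paper also bounds each length-normalized rollout entropy $U^{(r)}(x)$ in $[0,\log|\Vcal|]$, uses the i.i.d.\ rollouts from $q(\cdot\mid x)$, and applies Hoeffding's inequality to the empirical mean; your extra care about the random length $L_r$ and your explicit inversion to obtain $\eta(\alpha)$ fill in details the paper leaves implicit without changing the argument.
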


\begin{proof}
The sequence-level entropy $U^{(r)}(x)$ for each rollout is bounded within $[0, \log|\Vcal|]$. Since the $G$ rollouts are generated independent and identically distributed (i.i.d.) from the sampling policy $q(\cdot|x)$, Hoeffding's inequality~\cite{hoeffding1963probability} applies directly to the empirical mean $\hat{U}(x)$, yielding the stated bound.
\end{proof}

\subsection{Main Result: Rank Consistency}

\begin{theorem}[Theorem~\ref{thm:ranking-informal}]
\label{thm:ranking-appendix}
Consider a pair of prompts $(x, x')$. Let $\Delta_V := V(x) - V(x')$ be the observable difference in prompt entropy, and $\Delta_U := \hat{U}(x) - \hat{U}(x')$ be the difference in estimated response entropy. Under Assumptions \ref{ass:representation} and \ref{ass:propagation}, for any $\alpha \in (0, 1)$, with probability at least $1-2\alpha$:
\begin{equation}
  |\Delta_U - a\Delta_V| \le 2\epsilon + 2(a+1)\delta + 2\eta(\alpha).
  \label{eq:theorem_bound-appendix}
\end{equation}
Crucially, if the prompt entropy margin satisfies $|a\Delta_V| > 2\epsilon + 2(a+1)\delta + 2\eta(\alpha)$, then the ranking is preserved:
\begin{equation}
  \text{sign}(\hat{U}(x) - \hat{U}(x')) = \text{sign}(V(x) - V(x')).
  \label{eq:theorem_order-appendix}
\end{equation}
\end{theorem}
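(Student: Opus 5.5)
The plan is a triangle-inequality decomposition of $\Delta_U - a\Delta_V$ into a sampling part and a modeling part, followed by a union bound for the two prompts. The sampling part is handled by Lemma~\ref{lemma:concentration}, and the modeling part by Assumptions~\ref{ass:representation} and~\ref{ass:propagation}.

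\textbf{Step 1 (sampling error).} For each prompt I write
\begin{equation}
\Delta_U - a\Delta_V = \bigl(\hat U(x)-U^*(x)\bigr) - \bigl(\hat U(x')-U^*(x')\bigr) + \bigl(U^*(x)-U^*(x')\bigr) - a\Delta_V .
\end{equation}
I apply Lemma~\ref{lemma:concentration} separately to $x$ and to $x'$, each at confidence level $1-\alpha$. A union bound then gives probability at least $1-2\alpha$ that both $|\hat U(x)-U^*(x)|\le\eta(\alpha)$ and $|\hat U(x')-U^*(x')|\le\eta(\alpha)$ hold. On this event the first two differences together contribute at most $2\eta(\alpha)$.

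\textbf{Step 2 (deterministic modeling error).} For a single prompt I chain the assumptions:
\begin{equation}
U^*(x) - aV(x) - b = \bigl(U^*(x)-S_{\text{resp}}(x)\bigr) + \bigl(S_{\text{resp}}(x) - aS_{\text{prompt}}(x) - b\bigr) + a\bigl(S_{\text{prompt}}(x)-V(x)\bigr).
\end{equation}
Assumption~\ref{ass:representation} bounds the first and third terms, and Assumption~\ref{ass:propagation} bounds the middle term. This gives $|U^*(x)-aV(x)-b|\le \delta+\epsilon+a\delta=\epsilon+(a+1)\delta$. The same bound holds for $x'$.

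I then subtract the two single-prompt bounds so that the constant $b$ cancels. This yields $|(U^*(x)-U^*(x'))-a\Delta_V|\le 2\epsilon+2(a+1)\delta$. Adding the bound from Step 1 gives \eqref{eq:theorem_bound-appendix}.

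\textbf{Step 3 (sign preservation).} Write $B:=2\epsilon+2(a+1)\delta+2\eta(\alpha)$. If $|a\Delta_V|>B$, then $\Delta_U$ lies in the interval $[a\Delta_V-B,\,a\Delta_V+B]$, which excludes $0$. Hence $\operatorname{sign}(\Delta_U)=\operatorname{sign}(a\Delta_V)$, and since $a>0$ this equals $\operatorname{sign}(\Delta_V)$, which is \eqref{eq:theorem_order-appendix}.

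\textbf{Main obstacle.} The algebra above is routine. The one point that needs care is the probability accounting: the event in Step 1 must be the intersection of the two per-prompt concentration events, which is why the guarantee is $1-2\alpha$ rather than $1-\alpha$. A second point is that Lemma~\ref{lemma:concentration} requires the rollouts for each prompt to be i.i.d. under a fixed $\thetahat$. I will note that the prompts $x$ and $x'$ are fixed before any sampling, so each prompt's Hoeffding bound applies, and no independence between the two prompts is needed because the union bound does not require it.
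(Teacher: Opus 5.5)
Your proposal is correct and follows the paper's proof essentially step for step: a union bound over the two per-prompt Hoeffding events gives the $2\eta(\alpha)$ term, and the triangle inequality with Assumptions~\ref{ass:propagation} and~\ref{ass:representation} gives the $2\epsilon+2(a+1)\delta$ term. The differences are cosmetic: you bound $|U^*(x)-aV(x)-b|$ for each prompt before differencing, whereas the paper differences first to cancel $b$, and you state explicitly that $a>0$ is used in the final sign step, which the paper leaves implicit.
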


\begin{proof}
We decompose the error into three components: sampling noise, propagation residual, and representation approximation error.

\noindent\textbf{Step 1: Sampling Noise.} By applying Lemma \ref{lemma:concentration} and a union bound over prompts $x$ and $x'$, with probability $\ge 1-2\alpha$, the empirical estimates are close to their true expectations:
\begin{equation}
  |\Delta_U - (U^*(x) - U^*(x'))| \le 2\eta(\alpha).
  \label{eq:step1}
\end{equation}

\noindent\textbf{Step 2: Propagation Dynamics.} From Assumption \ref{ass:propagation}, we express the response representation entropy as $S_{\text{resp}}(x) = a S_{\text{prompt}}(x) + b + \xi(x)$ where $|\xi(x)| \le \epsilon$. The difference between two prompts cancels out the bias term $b$:
\begin{equation}
  |(S_{\text{resp}}(x) - S_{\text{resp}}(x')) - a(S_{\text{prompt}}(x) - S_{\text{prompt}}(x'))| \le 2\epsilon.
  \label{eq:step2}
\end{equation}

\noindent\textbf{Step 3: Representation Approximation.} Using Assumption \ref{ass:representation}, we substitute the theoretical representation entropies with observable token entropies. We have $|(U^*(x) - U^*(x')) - (S_{\text{resp}}(x) - S_{\text{resp}}(x'))| \le 2\delta$ and $|(S_{\text{prompt}}(x) - S_{\text{prompt}}(x')) - \Delta_V| \le 2\delta$. 
Applying the triangle inequality and scaling the prompt-side error by $a$:
\begin{equation}
  |(U^*(x) - U^*(x')) - a\Delta_V| \le 2\epsilon + 2\delta + 2a\delta = 2\epsilon + 2(a+1)\delta.
  \label{eq:step3}
\end{equation}

Combining Eq. \eqref{eq:step1} and Eq. \eqref{eq:step3} yields the final bound in Eq. \eqref{eq:theorem_bound-appendix}. The condition $|a\Delta_V| > \text{RHS}$ (Right-Hand Side) ensures that the signal magnitude exceeds the worst-case cumulative noise, guaranteeing that the sign of the difference remains unchanged.
\end{proof}

\section{Detailed Experimental Setting for Main Study}
\label{appendix-sec:detailed-experimental-setting}

This section gives the full experimental protocol for the main study. We follow the general setup of GRESO~\cite{zheng2025act} and make three aspects explicit: the shared training and evaluation protocol, the budget-matched baseline comparison, and the HIVE-specific selector configuration.

\subsection{Shared Training and Evaluation Protocol}
\label{appendix-subsec:shared-exp-protocol}

\paragraph{Models and datasets.}
We evaluate Qwen2.5-Math-1.5B~\cite{yang2024qwen25mathtechnicalreportmathematical}, DeepSeek-R1-Distill-Qwen-1.5B~\cite{guo2025deepseek}, Qwen3-4B-Base~\cite{yang2025qwen3technicalreport}, Qwen2.5-Math-7B~\cite{yang2024qwen25mathtechnicalreportmathematical}, and Llama-3.2-3B-Instruct~\cite{grattafiori2024llama3}. We additionally evaluate Qwen2.5-14B/32B~\cite{qwen2.5-large} in the scaling study. The context length is 4096 for Qwen2.5-Math-1.5B/7B, 8192 for DeepSeek-R1-Distill-Qwen-1.5B, Qwen3-4B-Base, and Llama-3.2-3B-Instruct, and 16384 for Qwen2.5-14B/32B. Training uses two math-reasoning corpora: DAPO+MATH, which combines DAPO~\cite{yu2025dapo} and MATH~\cite{hendrycks2021measuringmathematicalproblemsolving}, and a 30,000-example Open-R1 math subset~\cite{openr1}.

\paragraph{Optimization and rollout.}
Our implementation is based on verl~\cite{verl2025} and vLLM~\cite{vLLM2023SOSP}. We train the main models on 8$\times$A100 GPUs, use rollout sampling temperature 1.0, and sample $G=8$ responses per prompt. All models are trained for 1000 steps with AdamW~\cite{loshchilov2018decoupled}, learning rate $10^{-6}$, $\beta_1=0.9$, $\beta_2=0.999$, and weight decay 0.01. Following~\citet{yu2025dapo}, we remove the KL-divergence term. The actor is trained with Fully Sharded Data Parallel (FSDP)~\citep{zhao2023pytorch}.

For Qwen2.5-Math-1.5B/7B, Qwen3-4B-Base, and Llama-3.2-3B-Instruct, the final training batch contains $B_{\mathrm{t}}=256$ prompts, the Stage-2 candidate target is $B_{\mathrm{cand}}=384$, and the mini-batch size is 512. For DeepSeek-R1-Distill-Qwen-1.5B, we use $B_{\mathrm{t}}=128$, $B_{\mathrm{cand}}=192$, and mini-batch size 512. The larger Qwen2.5-14B/32B scaling runs use the same optimizer, sampling temperature, and selector hyperparameters, with the longer 16384-token context length noted above.

\paragraph{Rewards and evaluation.}
For reward assignment, we give a score of 0.1 when an answer can be successfully extracted and 1.0 when the extracted answer is correct. We evaluate every 50 steps on Math500~\cite{hendrycks2021measuringmathematicalproblemsolving}, AIME24~\cite{aops2024aime}, AMC~\cite{aops2024amc}, Minerva Math~\cite{NEURIPS2022_18abbeef}, Gaokao~\cite{zhang2024gaokao}, and Olympiad Bench~\cite{he-etal-2024-olympiadbench}. Following~\citet{zheng2025act}, we report the checkpoint with the best average performance across the six benchmarks.

\paragraph{Prompt template.}
All math-reasoning rollouts use the following template.

\begin{tcolorbox}[myexample={Question Template}]
Please solve the following math problem: \{\{Question Description\}\}. The assistant first thinks about the reasoning process step by step and then provides the user with the answer. Return the final answer in \textbackslash boxed\{\} tags, for example \textbackslash boxed\{1\}. Let's solve this step by step.
\end{tcolorbox}

\subsection{Baselines and Budget Matching}
\label{appendix-subsec:baselines-budget-matching}

We compare HIVE with four prompt-selection baselines. Dynamic Sampling (DS)~\cite{yu2025dapo} performs rollout-based online filtering and serves as the full-cost online baseline. Pre-filter is a fixed offline selector: prompts are filtered before RL training, and the selected subset remains unchanged during policy updates. GRESO~\cite{zheng2025act} is a history-based selector using past reward trajectories. PCL~\cite{Gao2025PCL} uses an auxiliary prompt-curriculum selector.

For each model family, GRESO, Pre-filter, and PCL are run under the rollout budgets shown in Table~\ref{tab:main_comparison}. Because the original PCL paper evaluates Pre-filter and PCL under a fixed wall-clock budget, while our main comparison fixes training at 1000 steps, we evaluate these two baselines with a matched-rollout two-run protocol. One run follows the original PCL-paper settings, and the other follows our common training settings; we report the higher average benchmark score across the two runs. Selector-specific runtime overhead is included in the ``Other'' column of Table~\ref{tab:efficiency_cost}.

\subsection{HIVE-Specific Hyperparameters and Initialization}
\label{appendix-subsec:hive-hyperparams}

Table~\ref{tab:hive-hyperparams} lists the selector hyperparameters used by HIVE, their notation in Section~\ref{sec:method} and Algorithm~\ref{alg:hive-training}, their deployed values, and their roles. Unless a row explicitly depends on $B_{\mathrm{t}}$, the same value is used across the main-result models and the Qwen2.5-14B/32B scaling study.

\begin{table}[t]
\centering
\footnotesize
\setlength{\tabcolsep}{4pt}
\caption{Complete list of HIVE-specific hyperparameters used in the main experiments. Defaults that we never override are marked with $^{\dagger}$.}
\label{tab:hive-hyperparams}
\begin{tabular}{>{\raggedright\arraybackslash}p{3.9cm} c c >{\raggedright\arraybackslash}p{6.1cm}}
\toprule
\textbf{Component} & \textbf{Symbol} & \textbf{Value} & \textbf{Description} \\
\midrule
\multicolumn{4}{l}{\textit{Stage~1: history-informed selection}} \\
\midrule
Reward / entropy mixing weight        & $\lambda$              & $1.0$      & Weight on the reward-history score in Eq.~\eqref{eq:stage1_select}; we set $\lambda{=}1$ in deployed runs and rely on Stage~2 to enforce the entropy criterion. The $(1{-}\lambda)\,s_{\mathrm{ent}}$ term in Eq.~\eqref{eq:stage1_select} is reserved for the ablation in Sec.~\ref{subsec:exp-component-study}. \\
Initial easy exploration prob.        & $p_{e,\mathrm{easy}}^{(0)}$ & $0.5$ & Base retention probability for prompts whose recent visits were all-correct zero-variance groups. \\
Initial hard exploration prob.        & $p_{e,\mathrm{hard}}^{(0)}$ & $0.5$ & Base retention probability for prompts whose recent visits were all-incorrect zero-variance groups. \\
Probability step size                 & $\Delta p$              & $0.01$     & Per-step additive update applied to $p_{e,\tau}$ in Eq.~\eqref{eq:adaptive_pe}. \\
Lower clip on $p_{e,\tau}$            & $p_{\min}$              & $0.05$$^{\dagger}$  & Floor of the projection range $[p_{\min},p_{\max}]$ in Eq.~\eqref{eq:adaptive_pe}. \\
Upper clip on $p_{e,\tau}$            & $p_{\max}$              & $0.95$$^{\dagger}$  & Ceiling of the projection range $[p_{\min},p_{\max}]$ in Eq.~\eqref{eq:adaptive_pe}. \\
Stability constant                    & $\epsilon_{p}$          & $0.01$     & Per-prompt floor on $s_{\mathrm{rew}}(x_i){=}\max\!\big((p_{e,\tau})^{z_i},\epsilon_p\big)$, which prevents long-saturated prompts from being permanently discarded. \\
Total target zero-variance ratio      & $\alpha$                & $0.25$     & Per-step zero-variance budget that drives the adaptive update of $p_{e,\tau}$. \\
Easy target ratio                     & $\alpha_{\mathrm{easy}}$ & $\alpha/3{=}0.083$ & Targeted fraction of all-correct zero-variance groups. \\
Hard target ratio                     & $\alpha_{\mathrm{hard}}$ & $2\alpha/3{=}0.167$ & Targeted fraction of all-incorrect zero-variance groups; the larger hard target retains more prompts that may enter the learnable region as the policy improves. \\
Easy classification threshold         & $r_{\mathrm{easy}}$     & $1.0$      & A reward group is counted as easy zero-variance iff every one of its $G$ rewards equals $1.0$. \\
Hard classification threshold         & $r_{\mathrm{hard}}$     & $0.1$      & A reward group is counted as hard zero-variance iff every one of its $G$ rewards equals $0.1$ (extracted but wrong). \\
History-trace size                    & $|\mathcal{T}_i|$       & unbounded  & Each prompt keeps the full sequence of past visits; only the trailing zero-variance run length $z_i^{(t)}$ enters Eq.~\eqref{eq:stage1_reward_score}. \\
\midrule
\multicolumn{4}{l}{\textit{Stage~2: online entropy verification}} \\
\midrule
Entropy keep ratio                    & $k_{\mathrm{keep}}$     & $0.5$      & Fraction of candidates retained by the entropy gate in Eq.~\eqref{eq:median_gate}; equivalent to a median threshold $\gamma_t$ when $k_{\mathrm{off}}{=}0$. \\
Entropy upper-trim ratio              & $k_{\mathrm{off}}$      & $0.25$     & Fraction of the highest-entropy candidates dropped before the keep band; in early training, these are typically degenerate non-reasoning prompts. \\
\midrule
\multicolumn{4}{l}{\textit{Batch sizing}} \\
\midrule
Final training batch (prompts)        & $B_{\mathrm{t}}$        & $256$ / $128$ & $256$ for Qwen2.5-Math-1.5B/7B, Qwen3-4B-Base, and Llama-3.2-3B-Instruct; $128$ for DeepSeek-R1-Distill-Qwen-1.5B. \\
Stage-2 candidate target (prompts)    & $B_{\mathrm{cand}}$     & $384$ / $192$ & Equal to $1.5\,B_{\mathrm{t}}$ across these configurations. \\
Group size (responses/prompt)         & $G$                     & $8$        & Number of responses sampled per prompt at rollout. \\
Dataloader micro-batch (prompts)      & $b_{\mathrm{raw}}$      & $32$       & Prompts pulled per inner iteration of the selector loop. \\
Adaptive-resample lower bound         & $b_{\mathrm{min}}$      & $64$$^{\dagger}$ & Floor on the adaptive resample size used by the top-up loop. \\
Adaptive-resample safety factor       & $\eta$                  & $1.25$$^{\dagger}$ & Safety multiplier on the predicted resample size in the top-up loop. \\
\bottomrule
\end{tabular}
\end{table}

\paragraph{Initialization.}
The data profiler storing $\mathcal{T}=\{\mathcal{T}_i\}$ is initialized as an empty dictionary, so a prompt $x_i$ has no history until its first rollout. For an unseen prompt, $z_i^{(t)}{=}0$, hence $s_{\mathrm{rew}}(x_i){=}1$ and Stage~1 accepts it with probability~$1$. Thus, every prompt receives one free visit before history-based filtering applies. We initialize $p_{e,\mathrm{easy}}^{(0)}{=}p_{e,\mathrm{hard}}^{(0)}{=}0.5$; the larger hard target $\alpha_{\mathrm{hard}}{=}2\alpha/3$ then retains hard zero-variance prompts more aggressively. In our main runs, typical steady-state values are $p_{e,\mathrm{easy}}{\approx}0.05{-}0.15$ and $p_{e,\mathrm{hard}}{\approx}0.6{-}0.9$. Stage~2 is stateless and needs no initialization. The selector uses the same global random seed as the verl trainer ($42$ unless specified otherwise), making the Stage~1 Bernoulli draws deterministic given a checkpoint.

\paragraph{Exact batch-sizing protocol.}
A GRPO update consumes exactly $B_{\mathrm{t}}\cdot G$ rollouts. HIVE obtains this fixed-size update batch with the following three-step accumulation procedure, which instantiates Algorithm~\ref{alg:hive-training}.

\begin{enumerate}\itemsep=2pt\parsep=0pt\topsep=2pt
\item \textit{Candidate accumulation.} Pull $b_{\mathrm{raw}}{=}32$ prompts from the dataloader, apply the Stage~1 Bernoulli filter with acceptance probability $P_{\mathrm{S1}}(x_i)$, then apply the Stage~2 entropy band gate: sort the survivors by $V_t(x)$ in descending order, drop the top $k_{\mathrm{off}}$ fraction, keep the next $k_{\mathrm{keep}}$ fraction, and round the kept count down to a multiple of $G$. Append the kept prompts to $\mathcal{C}_t$ until $|\mathcal{C}_t|\geq B_{\mathrm{cand}}^{\mathrm{adapt}}$, initialized as $B_{\mathrm{cand}}$.
\item \textit{Rollout filtering.} Sample $G$ responses for every $x\in\mathcal{C}_t$, score them, discard groups whose reward variance is exactly zero, and append the surviving prompt-response groups to $\mathcal{R}_t$.
\item \textit{Top-up and slicing.} If $|\mathcal{R}_t|<B_{\mathrm{t}}\cdot G$, set
\[
B_{\mathrm{cand}}^{\mathrm{adapt}}
\gets
\max\!\Big(
b_{\mathrm{min}},
\min\!\big(
B_{\mathrm{cand}},
\big\lceil
\tfrac{\eta\,(B_{\mathrm{t}}\cdot G-|\mathcal{R}_t|)}
{(1-\hat\rho_{\mathrm{zv}}^{(t)})\,G}
\big\rceil
\big)
\Big),
\]
where $\hat\rho_{\mathrm{zv}}^{(t)}$ is the zero-variance ratio observed in Step~2, and repeat candidate accumulation. Once enough non-zero-variance rollouts are available, slice $\mathcal{R}_t$ to the first $B_{\mathrm{t}}\cdot G$ rollouts with \texttt{select\_batch\_slice} and pass the result to GRPO. The slice is taken in arrival order, which is uniform over surviving rollouts because dataloader fetches are independent and Stage~1/2 acceptance is conditionally independent of buffer position.
\end{enumerate}

This protocol guarantees that every GRPO step receives an exactly-$B_{\mathrm{t}}$-prompt batch, while the number of raw prompts inspected can grow when many groups are discarded as zero-variance. The mean number of dataloader fetches per training step is approximately
\[
\frac{B_{\mathrm{cand}}}
{b_{\mathrm{raw}}\cdot k_{\mathrm{keep}}\cdot(1-\hat\rho_{\mathrm{zv}})}.
\]
For the Qwen2.5-Math-7B run, this is approximately $384/(32\cdot0.5\cdot0.85)\approx28$ inner iterations, or about 900 raw prompts inspected for the 256 prompts that finally enter the GRPO update.

\section{{Experimental Settings for Preliminary Analysis}}
\label{sec:preliminary-setup}

To derive the empirical observations regarding prompt utility and metadata staleness presented in Section~\ref{sec:obs-preliminary}, we conducted a focused study using the following setup.

\textbf{Model and Dataset.} We used Qwen2.5-Math-1.5B~\cite{yang2024qwen25mathtechnicalreportmathematical} as the base policy model. The model was trained on the DAPO+MATH dataset~\cite{yu2025dapo}. %We randomly sampled 2,000 prompts for the detailed gradient and entropy analysis.

\textbf{Training Configuration.} The model was trained using GRPO~\cite{shao2024deepseekmath} with a group size of $G=8$. To capture the diversity required for entropy analysis, we set the sampling temperature to $T=1.0$. Training was conducted on 8$\times$NVIDIA A100 (80GB) GPUs.

\textbf{Metric Definitions.}
\begin{itemize}
    % \item \textbf{Gradient Norm (Utility):} We quantify the learning signal using the $L_2$ norm of the gradients with respect to the last layer of the policy network, normalized by response length. This serves as a proxy for the update magnitude provided by a prompt.
    % \item \textbf{Difficulty:} We define the difficulty of a prompt $x$ as the empirical accuracy observed across $G$ rollouts, calculated as $Acc_{<T}(x) = \frac{1}{G\cdot T}\sum_{t=1}^{T}\sum_{i=1}^G \mathbb{I}(r_{i,t}=1)$, where $\mathbb{I}$ is the indicator function for a correct response. %For the heatmap analysis, we discretized difficulty into 5 bins: $[0, 0.2], (0.2, 0.4], (0.4, 0.6], (0.6, 0.8],$ and $(0.8, 1.0]$.
    % \item \textbf{Response Entropy:} Defined as the average token-level entropy of the generated response, averaged over all $G$ rollouts for a given prompt.
    \item \textbf{Effective prompt ratio.} A prompt is effective at a training step if its rollout group has non-zero reward variance and therefore yields a non-vanishing GRPO advantage. The effective prompt ratio is the fraction of selected prompts that are effective under the current policy.
    \item \textbf{Gradient norm (utility).} We quantify the learning signal of a prompt by the $L_2$ norm of its policy gradient with respect to the last layer, normalized by response length. This serves as a proxy for the update magnitude contributed by the prompt.
    \item \textbf{Empirical difficulty.} We estimate prompt difficulty from rollout accuracy. For a prompt $x$ evaluated over a window $\mathcal{W}$, we compute
    \begin{equation}
        A_{\mathcal{W}}(x)=\frac{1}{G|\mathcal{W}|}\sum_{t\in\mathcal{W}}\sum_{i=1}^{G}\mathbb{I}(r_{i,t}=1).
    \end{equation}
    Low accuracy corresponds to hard prompts, while high accuracy corresponds to easy prompts.
    \item \textbf{Response-side entropy.} For each prompt, we compute the average token-level entropy of generated responses and then average over the $G$ rollouts.
\end{itemize}

% \textbf{Staleness Quantification Setup.} To analyze the decay of historical reliability (Figure~\ref{fig:intro-utility-shift}), we employed a time-lagged comparison. We recorded the ``Historical" metrics (difficulty and entropy) at every training step. We then continued training for $\Delta=100$ steps. At step $\Delta$, we evaluated the same prompts to obtain ``Online'' metrics. 
% \begin{itemize}
%     \item \textbf{Historical Selection:} Selects the top-20\% of prompts based on the utility predicted by metadata recorded across the $\Delta t$ training steps.
%     \item \textbf{Online Selection:} Selects the top-20\% of prompts based on real-time metrics computed at step $\Delta$.
% \end{itemize}
% We then computed the actual gradient norms for both selected sets at step $t+\Delta$ to compare their effectiveness.
\textbf{Figure~\ref{fig:intro-history-staleness}: effective-prompt retention.}
We compare a history-only selector with a history-plus-online selector. The history-only selector ranks candidates using cached reward and response-entropy metadata from previous training steps. The history-plus-online selector uses the same historical candidate source but revalidates candidates under the current policy before rollout. We report the effective prompt ratio throughout training.

\textbf{Figure~\ref{fig:intro-high-entropy}: high-entropy non-zero-gradient prompts.}
To isolate utility differences among prompts that already pass zero-variance filtering, we compare DS with a diagnostic DS+High-Entropy variant. DS+High-Entropy prioritizes prompts with higher response-side entropy among prompts with non-zero gradients. We plot evaluation accuracy against the cumulative number of non-zero-gradient rollouts, so the comparison is made under matched informative-rollout budgets.

\textbf{Figure~\ref{fig:intro-learning-edge}: learning edge.}
We bin prompts by empirical difficulty and response-side entropy, using five bins for each axis. For every bin, we report the mean length-normalized gradient norm. The heatmap shows that utility is largest when prompts are neither trivial nor intractable and still induce high response-side uncertainty.

\textbf{Figure~\ref{fig:intro-utility-shift}: moving of the utility.}
We sample 515 non-zero reward variance prompts observed at steps 500 or 1000. Prompts with zero reward variance are assigned to the Zero class. The remaining non-zero-gradient prompts are split into High and Low utility classes by their length-normalized gradient norm at each step. The Sankey diagram reports transitions among these classes, quantifying how historical utility labels become stale as the policy evolves.
\section{Detailed Related Work}
\label{appendix-sec:detailed_related_work}
\textbf{RL for LLM reasoning.}
Reinforcement learning has become a key technique for post-training LLMs~\cite{christiano2017deep,bai2022training}. Early RLHF work~\cite{ouyang2022training,rafailov2023direct} used human preference signals to improve instruction following and alignment. More recent RLVR systems replace learned reward models with verifiable task rewards, producing large gains in mathematics and programming~\cite{guo2025deepseek,hu2025reinforce++,liu2503understanding,jaech2024openai,Zhang2025SRPOAC}. Building on RLVR, algorithmic work has improved PPO-style and GRPO-style training. VC-PPO~\cite{yuan2025s} and VAPO~\cite{yue2025vapo} strengthen value-function learning, while RLOO~\cite{ahmadian2024back} and GRPO~\cite{shao2024deepseekmath} avoid explicit value learning with multi-sample baselines. Other studies introduce replay mechanisms~\cite{liang2025squeeze,li2025repo,zhang2025rlep,lu2025safe} or adjust optimization objectives to improve stability and reduce bias~\cite{liu2503understanding,chu2025gpg,huang2025mapo,zheng2025group,www2025TFDCon,yyt2024eccv}. These works are complementary to HIVE: they improve how selected rollouts update the policy, whereas HIVE improves which prompts receive rollouts before the update.

\textbf{Data-efficient LLM training.}
Prompt and trajectory quality strongly affect LLM training efficiency. Offline curation follows the ``less is more'' principle from supervised fine-tuning~\cite{zhou2023lima,ye2025limo,zheng2024picture,wang2025comprehensivesurveyllmagentstack,tkde2024DConRec} and reinforced fine-tuning~\cite{Wang2025ReinforcementLF,Fatemi2025ConciseRV,Li2025LIMRLI,shi2025efficient,tang2025towards,cikm2022DcRec}. These approaches can reduce dataset size before training, but they cannot respond to the model's changing competence during RLVR.

Online selection addresses this limitation by conditioning selection on the current policy. Rollout-based methods discard uninformative prompts via real-time sampling~\cite{yu2025dapo,meng2025mm,foster2025learning,xu2025not,lin2025cppo,sun2025efficient} or prioritize medium-difficulty prompts~\cite{bae2025online}. They are online and precise, but they require repeated response generation before filtering. Estimator-based methods instead maintain historical logs~\cite{zheng2025act} or Bayesian utility estimates~\cite{chen2025self,zeng2025cures,shen2025bots}. These methods are efficient, but stale metadata can misrepresent the current policy's learning edge. Auxiliary-model approaches use external scorers or jointly trained predictors to estimate difficulty, as in DOTS~\cite{sun2025dots} and PCL~\cite{Gao2025PCL}; these improve adaptivity but add extra computation or memory. Stage-wise curriculum refreshes~\cite{zhang2025learning,Zhang2025SRPOAC} periodically re-estimate difficulty, but they still do not provide cheap per-step online verification.

HIVE is motivated by the observation that none of these categories simultaneously satisfy the online, and efficient requirements. Its history-informed stage inherits the low overhead of historical filtering, while its prompt-side entropy verifier corrects stale metadata without generating responses.

\begin{figure}[t]
    \centering
    % Row 1
    \begin{subfigure}[b]{0.45\linewidth}
        \centering
        \includegraphics[width=\linewidth]{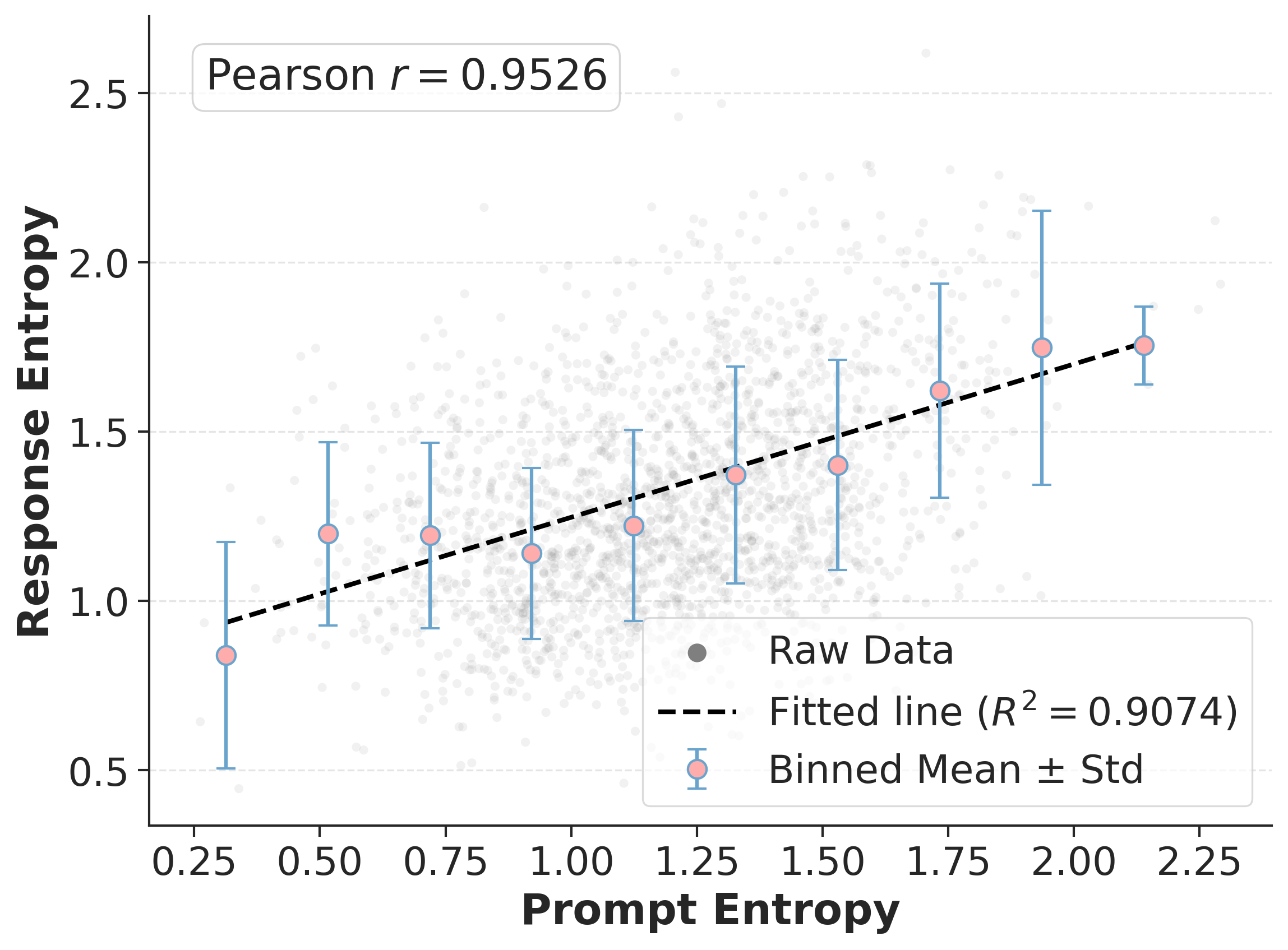}
        \caption{$r=10\%$}
        \label{fig:respRatio10}
    \end{subfigure}
    \hfill
    \begin{subfigure}[b]{0.45\linewidth}
        \centering
        \includegraphics[width=\linewidth]{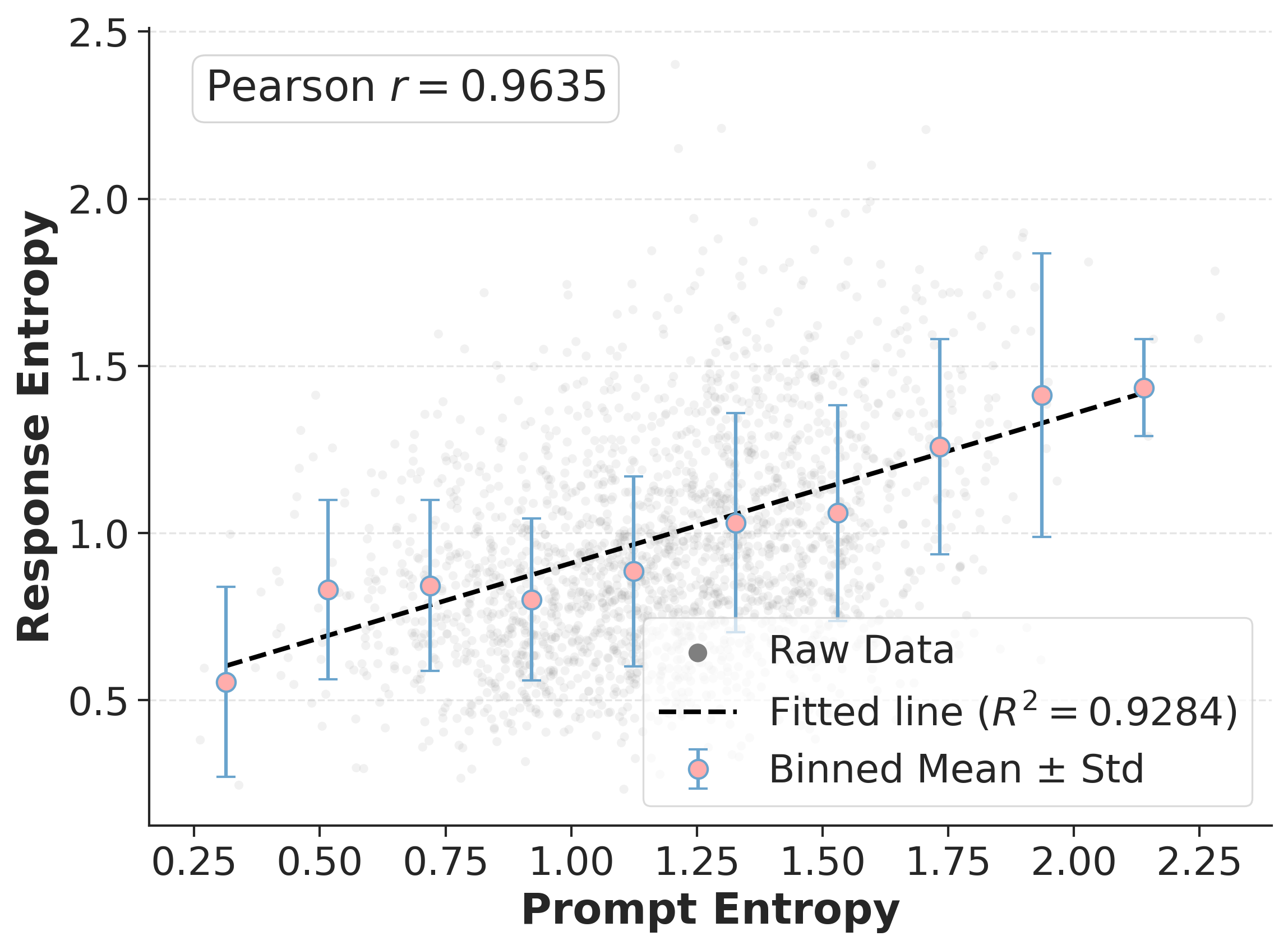}
        \caption{$r=20\%$}
        \label{fig:respRatio20}
    \end{subfigure}

    \vspace{0.8em} % 行间距，可按版面微调

    % Row 2
    \begin{subfigure}[b]{0.45\linewidth}
        \centering
        \includegraphics[width=\linewidth]{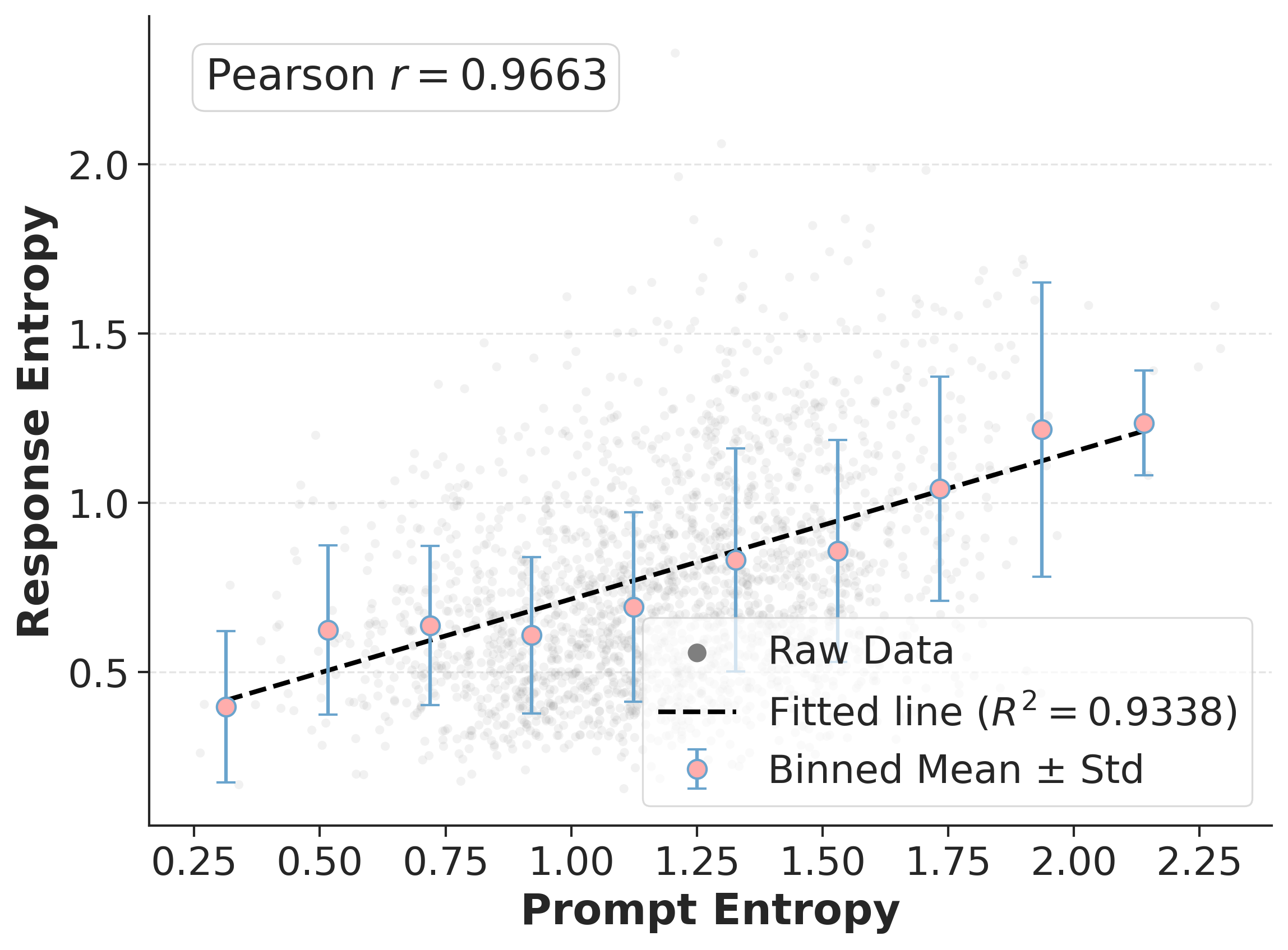}
        \caption{$r=30\%$}
        \label{fig:respRatio30}
    \end{subfigure}
    \hfill
    \begin{subfigure}[b]{0.45\linewidth}
        \centering
        \includegraphics[width=\linewidth]{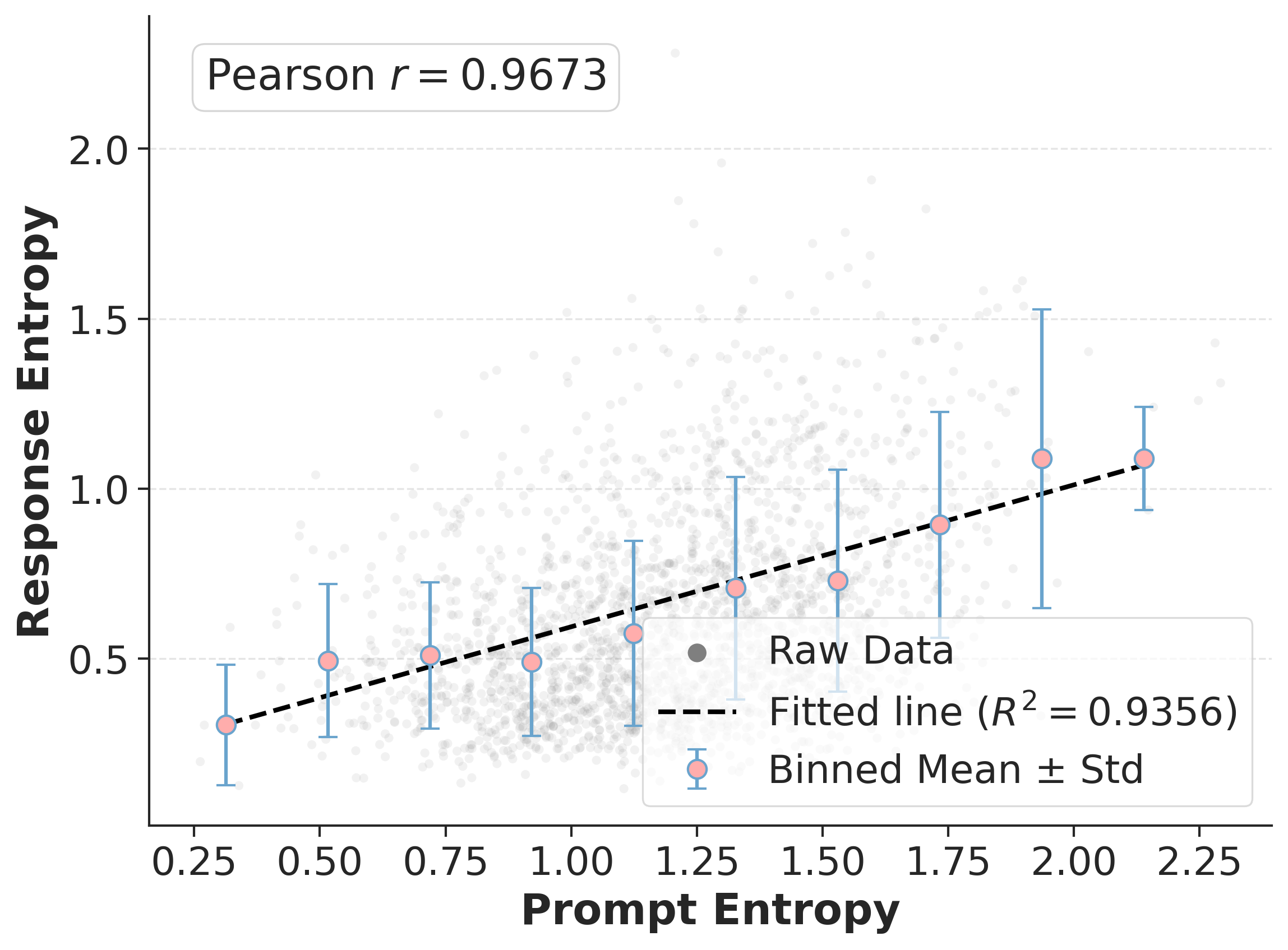}
        \caption{$r=40\%$}
        \label{fig:respRatio40}
    \end{subfigure}
    
    \caption{Relationship between prompt entropy and response entropy computed over the top-$r\%$ tokens in the response distribution, sweeping $r\in\{10,20,30,40\}$. Each panel reports the binned mean trend under the corresponding ratio setting.}
    \label{fig:combined_results_appendix}
\end{figure}

\section{Additional Experiments}
\label{appendix-sec:additional-exp}
\subsection{Correlation Between Prompt Entropy and Response Entropy}

Building on the empirical validation of Assumption \ref{ass:propagation}, we conduct an additional robustness study by redefining the response entropy $S_{\mathrm{resp}}(x)$ as the entropy computed over only the top-$r\%$ tokens of the response distribution, sweeping $r \in \{10,20,30,40\}$. To suppress intrinsic decoding stochasticity, we follow the same binned-mean protocol on 2{,}048 prompts, binning by $S_{\mathrm{prompt}}(x)$ (proxied by $V(x)$) and averaging $S_{\mathrm{resp}}^{(r)}(x)$ within each bin. As shown in Figure~\ref{fig:combined_results_appendix}, the prompt--response relationship remains strongly linear across all ratios.
This indicates that the prompt--response entropy relationship is robust to this response-entropy definition, further supporting the bounded-noise linear approximation used in Assumption~\ref{ass:propagation}.

\subsection{Case Study}

To gain deeper insight into the behavior of our selective filtering algorithm, we analyze a case study based on prompts from the MATH dataset. 
We divide these prompts into four categories: Frequently Skipped Prompts (Easy), Frequently Skipped Prompts (Hard), Frequently Selected Prompts, and Prompts Frequently Selected by Stage 1 but Skipped by Stage 2.
We observe that frequently skipped easy prompts often involve simple computations or routine use of formulas, leading to high solution rates across sampled responses. In contrast, hard prompts that are often skipped tend to be too difficult for the model, leading to low or zero success across rollouts, limiting their value for training.
Frequently selected prompts tend to be moderately difficult, making consistent contributions to model learning. 
Furthermore, we analyze the prompts selected in stage 1 but filtered after stage 2. 
Most prompts are relatively easy, showing that the historical information is not reliable for filtering.

\begin{tcolorbox}[casestudy={Frequently Skipped Prompts (Easy)}]
1. \textbf{Question:} What is the value of $(2x + 5)^2$ when $x = 3$? \textbf{Solution:} 121.
\vspace{0.2cm}

2. \textbf{Question:} Solve for $x$: $5^{x + 4} = 125^x$?  
\textbf{Solution:} 21.
\vspace{0.2cm}

3. \textbf{Question:} A squirrel travels at a constant 4 miles per hour. How long does it take for this squirrel to travel 1 mile? Express your answer in minutes.  
\textbf{Solution:} 15.
\vspace{0.2cm}

4. \textbf{Question:} Evaluate $\text{log}_5 625$.  
\textbf{Solution:} 4.
\vspace{0.2cm}

5. \textbf{Question:} What value of $x$ will give the minimum value for $x^2- 10x + 24$? 
\textbf{Solution:} 5.
\end{tcolorbox}

\begin{tcolorbox}[casestudy={Frequently Skipped Prompts (Hard)}]
1. \textbf{Question:} Let $\triangle{PQR}$ be a right triangle with $PQ = 90$, $PR = 120$, and $QR = 150$. Let $C_{1}$ be the inscribed circle. Construct $\overline{ST}$ with $S$ on $\overline{PR}$ and $T$ on $\overline{QR}$, such that $\overline{ST}$ is perpendicular to $\overline{PR}$ and tangent to $C_{1}$. Construct $\overline{UV}$ with $U$ on $\overline{PQ}$ and $V$ on $\overline{QR}$ such that $\overline{UV}$ is perpendicular to $\overline{PQ}$ and tangent to $C_{1}$. Let $C_{2}$ be the inscribed circle of $\triangle{RST}$ and $C_{3}$ the inscribed circle of $\triangle{QUV}$. The distance between the centers of $C_{2}$ and $C_{3}$ can be written as $\sqrt {10n}$. What is $n$?
\textbf{Solution:} 725.
\vspace{0.2cm}

2. \textbf{Question:} Consider the region $A^{}_{}$ in the complex plane that consists of all points $z^{}_{}$ such that both $\frac{z^{}_{}}{40}$ and $\frac{40^{}_{}}{\overline{z}}$ have real and imaginary parts between $0^{}_{}$ and $1^{}_{}$, inclusive. Find the area of $A.$
\textbf{Solution:} $1200 - 200 \pi$.
\vspace{0.2cm}

3. \textbf{Question:} Ephram is growing $3$ different variants of radishes in a row of $13$ radishes total, but he forgot where he planted each radish variant and he can't tell what variant a radish is before he picks it. Ephram knows that he planted at least one of each radish variant, and all radishes of one variant will form a consecutive string, with all such possibilities having an equal chance of occurring. He wants to pick three radishes to bring to the farmers market, and wants them to all be of different variants. Given that he uses optimal strategy, the probability that he achieves this can be expressed as $\frac{m}{n}$, where $m$ and $n$ are relatively prime positive integers. Find $m+n$.  
\textbf{Solution:} 17.
\vspace{0.2cm}

4. \textbf{Question:} There are 1000 rooms in a row along a long corridor. Initially, the first room contains 1000 people, and the remaining rooms are empty. Each minute, the following happens: for each room containing more than one person, someone in that room decides it is too crowded and moves to the next room. All these movements are simultaneous (so nobody moves more than once within a minute). After one hour, how many different rooms will have people in them?  
\textbf{Solution:} 31.
\vspace{0.2cm}

5. \textbf{Question:} A regular hexagon has side length \(6\). Congruent arcs with radius \(3\) are drawn with the center at each of the vertices, creating circular sectors as shown. The region inside the hexagon but outside the sectors is shaded as shown. The original answer is in the form \(a\sqrt{b}-c\pi\). Please find the value of a + b + c.
\textbf{Solution:} 75.
\end{tcolorbox}

\begin{tcolorbox}[casestudy={Frequently Selected Prompts After Stage 2}]
1. \textbf{Question:} Let $z$ be a complex number such that
\[|z - 12| + |z - 5i| = 13.\]
Find the smallest possible value of $|z|.$ \textbf{Solution:} $\frac{60}{13}$.
\vspace{0.2cm}

2. \textbf{Question:} Find all real solutions to $x^4+(2-x)^4=34$.  Enter all the solutions, separated by commas.  
\textbf{Solution:} $1 + \sqrt{2}, 1 - \sqrt{2}$.
\vspace{0.2cm}

3. \textbf{Question:} Let b be a real number randomly selected from the interval $[-17,17]$. Then, m and n are two relatively prime positive integers such that m\/n is the probability that the equation $x^4+25b^2=(4b^2-10b)x^2$ has $\textit{at least}$ two distinct real solutions. Find the value of $m+n$. 
\textbf{Solution:} 63.
\vspace{0.2cm}

4. \textbf{Question:} Let $a,$ $b,$ $c,$ $d$ be real numbers such that\[\\frac{(a - b)(c - d)}{(b - c)(d - a)} = \\frac{2}{5}.\]Find the sum of all possible values of\[\frac{(a - c)(b - d)}{(a - b)(c - d)}.\]  
\textbf{Solution:} $-\frac{3}{2}$.
\vspace{0.2cm}

5. \textbf{Question:} The function $f$, defined on the set of ordered pairs of positive integers, satisfies the following properties: 
\begin{align*}  f(x,x) &=x, \\  f(x,y) &=f(y,x), \quad \text{and} \\ (x + y) f(x,y) &= yf(x,x + y).  
\end{align*}
Calculate $f(14,52)$. 
\textbf{Solution:} 364.
\end{tcolorbox}

\begin{tcolorbox}[casestudy={Prompts Frequently Selected by Stage 1 but Skipped by Stage 2}]
1. \textbf{Question:} A polynomial with integer coefficients is of the form
\[9x^4 + a_3 x^3 + a_2 x^2 + a_1 x + 15 = 0.\]
Find the number of different possible rational roots of this polynomial.
\textbf{Solution:} 16.
\vspace{0.2cm}

2. \textbf{Question:} When an integer is divided by 15, the remainder is 7. Find the sum of the remainders when the same integer is divided by 3 and by 5.  
\textbf{Solution:} 3.
\vspace{0.2cm}

3. \textbf{Question:} Find the number of positive divisors of 2002.  
\textbf{Solution:} 16.
\vspace{0.2cm}

4. \textbf{Question:} What is the base five product of the numbers $121_{5}$ and $11_{5}$?
\textbf{Solution:} 1331.
\vspace{0.2cm}

5. \textbf{Question:} If $\sin x = 3 \cos x,$ then what is $\sin x \cos x$ 
\textbf{Solution:} $\frac{3}{10}$.
\end{tcolorbox}

\begin{algorithm}[t]
\caption{Training Iteration in HIVE}
\label{alg:hive-training}
\begin{algorithmic}[1]
    \REQUIRE Dataset $\mathcal{D}$; Historical trace $\mathcal{T}$; Target training batch size $B_{\text{t}}$; Group size $G$; Base exploration probability $p_{e,easy}$, $p_{e,hard}$; Step size $\Delta p$; Target zero-var ratio $\alpha$.
    
    \STATE $\mathcal{D}_{cand} \gets \emptyset$
    \STATE $n_{easy}, n_{hard}, n_{total} \gets 0, 0, 0$
    
    \STATE \textcolor{gray}{\textit{\# Stage 1: History-Informed Selection}}
    \REPEAT
        \STATE $\{x_i\} \gets \text{Sample raw prompts from } \mathcal{D}$
        \FOR{each $x_i$ in batch}
            \STATE Retrieve zero-var count $z_i$, zero-var type, and history entropy $H_i$ from $\mathcal{T}$
            \STATE Choose $p_e \gets p_{e,easy}$, $p_{e,hard}$, or the default $p_e$ according to the zero-var type
            \STATE $P_{Rew} \gets p_e^{z_i}; \quad P_{Ent} \gets \text{Normalize}(H_i)$
            \STATE $P_{select} \gets \lambda P_{Rew} + (1-\lambda) P_{Ent}$
            
            \IF{$\text{Bernoulli}(P_{select}) = 1$}
                \STATE $\mathcal{D}_{cand} \gets \mathcal{D}_{cand} \cup \{x_i\}$
            \ENDIF
        \ENDFOR
    % \UNTIL{$|\mathcal{D}_{cand}| \geq 2 \cdot B_{\text{t}}$}
    \UNTIL{$|\mathcal{D}_{cand}| \geq B_{\mathrm{cand}}$ \quad\textcolor{gray}{\textit{\# $B_{\mathrm{cand}}{=}1.5\,B_{\mathrm{t}}$ in our experiments; see Appendix~\ref{appendix-subsec:hive-hyperparams}}}}

    \STATE \textcolor{gray}{\textit{\# Stage 2: Online-Verified Selection}} %(Deterministic Gate)
    \STATE Calculate prompt entropy $V(x)$ for all $x \in \mathcal{D}_{cand}$
    \STATE $\gamma \gets \text{median}(\{V(x) \mid x \in \mathcal{D}_{cand}\})$
    \STATE $\mathcal{D}_{final} \gets \{x \in \mathcal{D}_{cand} \mid V(x) \geq \gamma\}$

    \STATE \textcolor{gray}{\textit{\# Rollout Phase}}
    \STATE $\{x_i, r_i\} \gets \text{Generate } G \text{ rollouts for each } x \in \mathcal{D}_{final}$

    \STATE \textcolor{gray}{\textit{\# GRPO Training}}
    \STATE Update policy model $\pi_\theta$ using GRPO on $\{x_i, r_i\}$
    
    \STATE \textcolor{gray}{\textit{\# History Update \& Statistics}}
    \STATE Update $\mathcal{T}$ with new rewards and response entropies
    \STATE $n_{total} \gets |\mathcal{D}_{final}|$
    \STATE $n_{easy} \gets \text{count\_zero\_var}(\text{easy\_subset}); \quad n_{hard} \gets \text{count\_zero\_var}(\text{hard\_subset})$

    \STATE \textcolor{gray}{\textit{\# Adaptive Exploration Adjustment}}
    \FOR{type $\in \{easy, hard\}$}
        \IF{$n_{\text{type}} / n_{\text{total}} > \alpha$}
            \STATE $p_{e,\text{type}} \gets p_{e,\text{type}} - \Delta p$
        \ELSE
            \STATE $p_{e,\text{type}} \gets p_{e,\text{type}} + \Delta p$
        \ENDIF
    \ENDFOR

\end{algorithmic}
\end{algorithm}

\section{{Algorithm}}
\label{appendix-sec:alg}

The HIVE training procedure is illustrated in Algorithm~\ref{alg:hive-training}. 
The process begins with \textbf{History-Informed Selection}, where the method iteratively samples raw prompts from the dataset $\mathcal{D}$. 
For each sampled prompt, we retrieve its historical metadata, consisting of the consecutive zero-variance counts $z_i$ and the historical response entropy $H_i$. These metrics are combined into a selection probability $P_{select}$ using a weighted combination of reward decay and entropy normalization. We employ Bernoulli sampling to accumulate a candidate pool $\mathcal{D}_{cand}$ until its size reaches twice the target training batch size (i.e., $2 \cdot B_t$). This $2\times$ oversampling strategy is a prerequisite for the subsequent median-based truncation, ensuring the final batch size aligns with the target computational budget.

Once the candidate pool is populated, the flow transitions to \textbf{Online-Verified Selection}, a deterministic gatekeeper grounded in the model's current state. For every candidate in $\mathcal{D}_{cand}$, we compute the prompt-side entropy $V(x)$ via a single efficient forward pass. To adaptively identify the current learning edge, we calculate the median entropy $\gamma = \text{median}(\{V(x)\})$ of the accumulated pool. A strict filter is then applied to retain only prompts where $V(x) \geq \gamma$. This step effectively discards the bottom 50\% of samples with low uncertainty, yielding a final high-utility batch $\mathcal{D}_{final}$.

% With the filtered batch established, the method proceeds to the normal \textbf{Rollout and Policy Update} phase, which is identical to the GRPO algorithm. 
% After training finishes, the resulting rewards and response entropies are immediately written back to the historical trace $\mathcal{T}$ to update the metadata for future iterations. 
With the filtered batch established, the method proceeds to the normal \textbf{Rollout and Policy Update} phase, which is identical to the GRPO algorithm.
After training finishes, the resulting rewards and response entropies are immediately written back to the historical trace $\mathcal{T}$ to update the metadata for future iterations.

A complete list of every selector hyperparameter (including $\lambda$, $\alpha$, $\Delta p$, $\epsilon_p$, $p_{\min}/p_{\max}$, $k_{\mathrm{keep}}$, $k_{\mathrm{off}}$, the trace and unseen-prompt initialization, and the exact batch-sizing protocol that produces a $B_{\mathrm{t}}$-prompt GRPO update) is provided in Appendix~\ref{appendix-subsec:hive-hyperparams}.

\section{Asset Licenses}
% \paragraph{.}
\label{sec:asset-licenses}
We use publicly available models, datasets, and software frameworks. 
Qwen2.5-Math, Qwen3, Llama3.2, DeepSeek-R1-Distill, DAPO+MATH, Open-R1, verl, and vLLM are used according to their original licenses and terms.  Table~\ref{tab:assets} summarizes the source and license information.

\begin{table}[h]
\centering
\caption{Sources and licenses of publicly available assets used in this work.}
\label{tab:assets}
\resizebox{\columnwidth}{!}{%
\begin{tabular}{llll}
\toprule
\textbf{Asset} & \textbf{Type} & \textbf{Source} & \textbf{License} \\
\midrule
Qwen2.5-Math        & Model     & huggingface.co/Qwen               & Apache 2.0 \\
Qwen3               & Model     & huggingface.co/Qwen               & Apache 2.0 \\
Llama-3.2           & Model     & huggingface.co/meta-llama         & Llama 3.2 Community \\
DeepSeek-R1-Distill & Model     & huggingface.co/deepseek-ai        & MIT \\
DAPO-Math-17k       & Dataset   & github.com/BytedTsinghua-SIA/DAPO & Apache 2.0 \\
MATH                & Dataset   & github.com/hendrycks/math         & MIT \\
Open-R1             & Framework & github.com/huggingface/open-r1    & Apache 2.0 \\
verl                & Framework & github.com/volcengine/verl        & Apache 2.0 \\
vLLM                & Framework & github.com/vllm-project/vllm      & Apache 2.0 \\
\bottomrule
\end{tabular}}
\end{table}

\newpage
\section*{NeurIPS Paper Checklist}

\begin{enumerate}

\item {\bf Claims}
    \item[] Question: Do the main claims made in the abstract and introduction accurately reflect the paper's contributions and scope?
    \item[] Answer: \answerYes{}.
    % \item[] Justification: The abstract and Section~\ref{sec:introduction} state the moving-edge tracking claim and the online--precise--efficient requirements. These claims are supported by the methodology in Section~\ref{sec:method}, experiments in Section~\ref{sec:experiments}, and the limitations section.
    \item[] Justification: The abstract and Section~\ref{sec:introduction} state the moving-edge tracking claim and the edge-aware, online, and efficient selection objectives. These claims are supported by the methodology in Section~\ref{sec:method}, experiments in Section~\ref{sec:experiments}, and the limitations discussion.
    \item[] Guidelines:
    \begin{itemize}
        \item The answer \answerNA{} means that the abstract and introduction do not include the claims made in the paper.
        \item The abstract and/or introduction should clearly state the claims made, including the contributions made in the paper and important assumptions and limitations. A \answerNo{} or \answerNA{} answer to this question will not be perceived well by the reviewers. 
        \item The claims made should match theoretical and experimental results, and reflect how much the results can be expected to generalize to other settings. 
        \item It is fine to include aspirational goals as motivation as long as it is clear that these goals are not attained by the paper. 
    \end{itemize}

\item {\bf Limitations}
    \item[] Question: Does the paper discuss the limitations of the work performed by the authors?
    \item[] Answer: \answerYes{}.
    % \item[] Justification: The paper includes a dedicated Limitations section discussing the restriction to text-based LRMs and the remaining hyperparameter considerations.
    \item[] Justification: Section~\ref{sec:limitations} discusses the limitation of the prompt-entropy proxy assumption.
    \item[] Guidelines:
    \begin{itemize}
        \item The answer \answerNA{} means that the paper has no limitation while the answer \answerNo{} means that the paper has limitations, but those are not discussed in the paper. 
        \item The authors are encouraged to create a separate ``Limitations'' section in their paper.
        \item The paper should point out any strong assumptions and how robust the results are to violations of these assumptions (e.g., independence assumptions, noiseless settings, model well-specification, asymptotic approximations only holding locally). The authors should reflect on how these assumptions might be violated in practice and what the implications would be.
        \item The authors should reflect on the scope of the claims made, e.g., if the approach was only tested on a few datasets or with a few runs. In general, empirical results often depend on implicit assumptions, which should be articulated.
        \item The authors should reflect on the factors that influence the performance of the approach. For example, a facial recognition algorithm may perform poorly when image resolution is low or images are taken in low lighting. Or a speech-to-text system might not be used reliably to provide closed captions for online lectures because it fails to handle technical jargon.
        \item The authors should discuss the computational efficiency of the proposed algorithms and how they scale with dataset size.
        \item If applicable, the authors should discuss possible limitations of their approach to address problems of privacy and fairness.
        \item While the authors might fear that complete honesty about limitations might be used by reviewers as grounds for rejection, a worse outcome might be that reviewers discover limitations that aren't acknowledged in the paper. The authors should use their best judgment and recognize that individual actions in favor of transparency play an important role in developing norms that preserve the integrity of the community. Reviewers will be specifically instructed to not penalize honesty concerning limitations.
    \end{itemize}

\item {\bf Theory assumptions and proofs}
    \item[] Question: For each theoretical result, does the paper provide the full set of assumptions and a complete (and correct) proof?
    \item[] Answer: \answerYes{}.
    \item[] Justification: The informal rank-consistency statement appears in Section~\ref{subsec:stage2}; the formal assumptions, concentration lemma, theorem, and proof are provided in Appendix~\ref{appendix-sec:proof-thm}.
    \item[] Guidelines:
    \begin{itemize}
        \item The answer \answerNA{} means that the paper does not include theoretical results. 
        \item All the theorems, formulas, and proofs in the paper should be numbered and cross-referenced.
        \item All assumptions should be clearly stated or referenced in the statement of any theorems.
        \item The proofs can either appear in the main paper or the supplemental material, but if they appear in the supplemental material, the authors are encouraged to provide a short proof sketch to provide intuition. 
        \item Inversely, any informal proof provided in the core of the paper should be complemented by formal proofs provided in appendix or supplemental material.
        \item Theorems and Lemmas that the proof relies upon should be properly referenced. 
    \end{itemize}

    \item {\bf Experimental result reproducibility}
    \item[] Question: Does the paper fully disclose all the information needed to reproduce the main experimental results of the paper to the extent that it affects the main claims and/or conclusions of the paper (regardless of whether the code and data are provided or not)?
    \item[] Answer: \answerYes{}.
    \item[] Justification: Section~\ref{sec:experiments} and Appendix~\ref{appendix-sec:detailed-experimental-setting} describe the models, datasets, rollout settings, optimizer, hardware, evaluation benchmarks, and checkpoint selection protocol needed to reproduce the reported results.
    \item[] Guidelines:
    \begin{itemize}
        \item The answer \answerNA{} means that the paper does not include experiments.
        \item If the paper includes experiments, a \answerNo{} answer to this question will not be perceived well by the reviewers: Making the paper reproducible is important, regardless of whether the code and data are provided or not.
        \item If the contribution is a dataset and\slash or model, the authors should describe the steps taken to make their results reproducible or verifiable. 
        \item Depending on the contribution, reproducibility can be accomplished in various ways. For example, if the contribution is a novel architecture, describing the architecture fully might suffice, or if the contribution is a specific model and empirical evaluation, it may be necessary to either make it possible for others to replicate the model with the same dataset, or provide access to the model. In general. releasing code and data is often one good way to accomplish this, but reproducibility can also be provided via detailed instructions for how to replicate the results, access to a hosted model (e.g., in the case of a large language model), releasing of a model checkpoint, or other means that are appropriate to the research performed.
        \item While NeurIPS does not require releasing code, the conference does require all submissions to provide some reasonable avenue for reproducibility, which may depend on the nature of the contribution. For example
        \begin{enumerate}
            \item If the contribution is primarily a new algorithm, the paper should make it clear how to reproduce that algorithm.
            \item If the contribution is primarily a new model architecture, the paper should describe the architecture clearly and fully.
            \item If the contribution is a new model (e.g., a large language model), then there should either be a way to access this model for reproducing the results or a way to reproduce the model (e.g., with an open-source dataset or instructions for how to construct the dataset).
            \item We recognize that reproducibility may be tricky in some cases, in which case authors are welcome to describe the particular way they provide for reproducibility. In the case of closed-source models, it may be that access to the model is limited in some way (e.g., to registered users), but it should be possible for other researchers to have some path to reproducing or verifying the results.
        \end{enumerate}
    \end{itemize}

\item {\bf Open access to data and code}
    \item[] Question: Does the paper provide open access to the data and code, with sufficient instructions to faithfully reproduce the main experimental results, as described in supplemental material?
    \item[] Answer: \answerNo{}.
    \item[] Justification: The paper cites the public datasets and implementation frameworks used, but the current submission does not include an anonymized code release or runnable reproduction scripts.
    \item[] Guidelines:
    \begin{itemize}
        \item The answer \answerNA{} means that paper does not include experiments requiring code.
        \item Please see the NeurIPS code and data submission guidelines (\url{https://neurips.cc/public/guides/CodeSubmissionPolicy}) for more details.
        \item While we encourage the release of code and data, we understand that this might not be possible, so \answerNo{} is an acceptable answer. Papers cannot be rejected simply for not including code, unless this is central to the contribution (e.g., for a new open-source benchmark).
        \item The instructions should contain the exact command and environment needed to run to reproduce the results. See the NeurIPS code and data submission guidelines (\url{https://neurips.cc/public/guides/CodeSubmissionPolicy}) for more details.
        \item The authors should provide instructions on data access and preparation, including how to access the raw data, preprocessed data, intermediate data, and generated data, etc.
        \item The authors should provide scripts to reproduce all experimental results for the new proposed method and baselines. If only a subset of experiments are reproducible, they should state which ones are omitted from the script and why.
        \item At submission time, to preserve anonymity, the authors should release anonymized versions (if applicable).
        \item Providing as much information as possible in supplemental material (appended to the paper) is recommended, but including URLs to data and code is permitted.
    \end{itemize}

\item {\bf Experimental setting/details}
    \item[] Question: Does the paper specify all the training and test details (e.g., data splits, hyperparameters, how they were chosen, type of optimizer) necessary to understand the results?
    \item[] Answer: \answerYes{}.
    \item[] Justification: The experimental settings are summarized in Section~\ref{sec:experiments} and expanded in Appendix~\ref{appendix-sec:detailed-experimental-setting}, including models, datasets, context lengths, batch sizes, optimizer, evaluation cadence, and reward assignment.
    \item[] Guidelines:
    \begin{itemize}
        \item The answer \answerNA{} means that the paper does not include experiments.
        \item The experimental setting should be presented in the core of the paper to a level of detail that is necessary to appreciate the results and make sense of them.
        \item The full details can be provided either with the code, in appendix, or as supplemental material.
    \end{itemize}

\item {\bf Experiment statistical significance}
    \item[] Question: Does the paper report error bars suitably and correctly defined or other appropriate information about the statistical significance of the experiments?
    \item[] Answer: \answerNo{}.
    \item[] Justification: The paper reports benchmark accuracies and efficiency measurements, and includes binned mean/std statistics for the entropy-propagation validation, but it does not report repeated-run error bars for the main training results due to the computational cost of large-scale RL training.
    \item[] Guidelines:
    \begin{itemize}
        \item The answer \answerNA{} means that the paper does not include experiments.
        \item The authors should answer \answerYes{} if the results are accompanied by error bars, confidence intervals, or statistical significance tests, at least for the experiments that support the main claims of the paper.
        \item The factors of variability that the error bars are capturing should be clearly stated (for example, train/test split, initialization, random drawing of some parameter, or overall run with given experimental conditions).
        \item The method for calculating the error bars should be explained (closed form formula, call to a library function, bootstrap, etc.)
        \item The assumptions made should be given (e.g., Normally distributed errors).
        \item It should be clear whether the error bar is the standard deviation or the standard error of the mean.
        \item It is OK to report 1-sigma error bars, but one should state it. The authors should preferably report a 2-sigma error bar than state that they have a 96\% CI, if the hypothesis of Normality of errors is not verified.
        \item For asymmetric distributions, the authors should be careful not to show in tables or figures symmetric error bars that would yield results that are out of range (e.g., negative error rates).
        \item If error bars are reported in tables or plots, the authors should explain in the text how they were calculated and reference the corresponding figures or tables in the text.
    \end{itemize}

\item {\bf Experiments compute resources}
    \item[] Question: For each experiment, does the paper provide sufficient information on the computer resources (type of compute workers, memory, time of execution) needed to reproduce the experiments?
    \item[] Answer: \answerYes{}.
    \item[] Justification: Section~\ref{sec:experiments} and Appendix~\ref{appendix-sec:detailed-experimental-setting} specify the use of A100 GPUs, and Table~\ref{tab:efficiency_cost} reports wall-clock training-time breakdowns.
    \item[] Guidelines:
    \begin{itemize}
        \item The answer \answerNA{} means that the paper does not include experiments.
        \item The paper should indicate the type of compute workers CPU or GPU, internal cluster, or cloud provider, including relevant memory and storage.
        \item The paper should provide the amount of compute required for each of the individual experimental runs as well as estimate the total compute. 
        \item The paper should disclose whether the full research project required more compute than the experiments reported in the paper (e.g., preliminary or failed experiments that didn't make it into the paper). 
    \end{itemize}
    
\item {\bf Code of ethics}
    \item[] Question: Does the research conducted in the paper conform, in every respect, with the NeurIPS Code of Ethics \url{https://neurips.cc/public/EthicsGuidelines}?
    \item[] Answer: \answerYes{}.
    \item[] Justification: The work studies algorithmic efficiency for RL post-training on public math-reasoning resources and does not involve human subjects, private data, or deployment decisions.
    \item[] Guidelines:
    \begin{itemize}
        \item The answer \answerNA{} means that the authors have not reviewed the NeurIPS Code of Ethics.
        \item If the authors answer \answerNo, they should explain the special circumstances that require a deviation from the Code of Ethics.
        \item The authors should make sure to preserve anonymity (e.g., if there is a special consideration due to laws or regulations in their jurisdiction).
    \end{itemize}

\item {\bf Broader impacts}
    \item[] Question: Does the paper discuss both potential positive societal impacts and negative societal impacts of the work performed?
    \item[] Answer: \answerYes{}.
    % \item[] Justification: The Impact Statements section discusses positive impacts on compute efficiency and accessibility, and states the expected negative risks as general risks associated with more efficient LLM reasoning training.
    \item[] Justification: Section~\ref{sec:limitations} discusses positive impacts on compute efficiency and accessibility, and negative risks from lowering the barrier to training stronger reasoning models.
    \item[] Guidelines:
    \begin{itemize}
        \item The answer \answerNA{} means that there is no societal impact of the work performed.
        \item If the authors answer \answerNA{} or \answerNo, they should explain why their work has no societal impact or why the paper does not address societal impact.
        \item Examples of negative societal impacts include potential malicious or unintended uses (e.g., disinformation, generating fake profiles, surveillance), fairness considerations (e.g., deployment of technologies that could make decisions that unfairly impact specific groups), privacy considerations, and security considerations.
        \item The conference expects that many papers will be foundational research and not tied to particular applications, let alone deployments. However, if there is a direct path to any negative applications, the authors should point it out. For example, it is legitimate to point out that an improvement in the quality of generative models could be used to generate Deepfakes for disinformation. On the other hand, it is not needed to point out that a generic algorithm for optimizing neural networks could enable people to train models that generate Deepfakes faster.
        \item The authors should consider possible harms that could arise when the technology is being used as intended and functioning correctly, harms that could arise when the technology is being used as intended but gives incorrect results, and harms following from (intentional or unintentional) misuse of the technology.
        \item If there are negative societal impacts, the authors could also discuss possible mitigation strategies (e.g., gated release of models, providing defenses in addition to attacks, mechanisms for monitoring misuse, mechanisms to monitor how a system learns from feedback over time, improving the efficiency and accessibility of ML).
    \end{itemize}
    
\item {\bf Safeguards}
    \item[] Question: Does the paper describe safeguards that have been put in place for responsible release of data or models that have a high risk for misuse (e.g., pre-trained language models, image generators, or scraped datasets)?
    \item[] Answer: \answerNA{}.
    \item[] Justification: The paper does not release a new dataset or model checkpoint; it proposes a training-efficiency method evaluated on existing math-reasoning benchmarks.
    \item[] Guidelines:
    \begin{itemize}
        \item The answer \answerNA{} means that the paper poses no such risks.
        \item Released models that have a high risk for misuse or dual-use should be released with necessary safeguards to allow for controlled use of the model, for example by requiring that users adhere to usage guidelines or restrictions to access the model or implementing safety filters. 
        \item Datasets that have been scraped from the Internet could pose safety risks. The authors should describe how they avoided releasing unsafe images.
        \item We recognize that providing effective safeguards is challenging, and many papers do not require this, but we encourage authors to take this into account and make a best faith effort.
    \end{itemize}

\item {\bf Licenses for existing assets}
    \item[] Question: Are the creators or original owners of assets (e.g., code, data, models), used in the paper, properly credited and are the license and terms of use explicitly mentioned and properly respected?
    \item[] Answer: \answerYes{}.
    \item[] Justification: The paper credits the models, datasets, and software frameworks through citations, and explicit license names and terms for all assets are listed in Table~\ref{tab:assets}.
    \item[] Guidelines:
    \begin{itemize}
        \item The answer \answerNA{} means that the paper does not use existing assets.
        \item The authors should cite the original paper that produced the code package or dataset.
        \item The authors should state which version of the asset is used and, if possible, include a URL.
        \item The name of the license (e.g., CC-BY 4.0) should be included for each asset.
        \item For scraped data from a particular source (e.g., website), the copyright and terms of service of that source should be provided.
        \item If assets are released, the license, copyright information, and terms of use in the package should be provided. For popular datasets, \url{paperswithcode.com/datasets} has curated licenses for some datasets. Their licensing guide can help determine the license of a dataset.
        \item For existing datasets that are re-packaged, both the original license and the license of the derived asset (if it has changed) should be provided.
        \item If this information is not available online, the authors are encouraged to reach out to the asset's creators.
    \end{itemize}

\item {\bf New assets}
    \item[] Question: Are new assets introduced in the paper well documented and is the documentation provided alongside the assets?
    \item[] Answer: \answerNA{}.
    \item[] Justification: The submission introduces a method and experimental analysis, but it does not release a new dataset, benchmark, or model asset.
    \item[] Guidelines:
    \begin{itemize}
        \item The answer \answerNA{} means that the paper does not release new assets.
        \item Researchers should communicate the details of the dataset\slash code\slash model as part of their submissions via structured templates. This includes details about training, license, limitations, etc. 
        \item The paper should discuss whether and how consent was obtained from people whose asset is used.
        \item At submission time, remember to anonymize your assets (if applicable). You can either create an anonymized URL or include an anonymized zip file.
    \end{itemize}

\item {\bf Crowdsourcing and research with human subjects}
    \item[] Question: For crowdsourcing experiments and research with human subjects, does the paper include the full text of instructions given to participants and screenshots, if applicable, as well as details about compensation (if any)? 
    \item[] Answer: \answerNA{}.
    \item[] Justification: The paper does not involve crowdsourcing experiments or research with human subjects.
    \item[] Guidelines:
    \begin{itemize}
        \item The answer \answerNA{} means that the paper does not involve crowdsourcing nor research with human subjects.
        \item Including this information in the supplemental material is fine, but if the main contribution of the paper involves human subjects, then as much detail as possible should be included in the main paper. 
        \item According to the NeurIPS Code of Ethics, workers involved in data collection, curation, or other labor should be paid at least the minimum wage in the country of the data collector. 
    \end{itemize}

\item {\bf Institutional review board (IRB) approvals or equivalent for research with human subjects}
    \item[] Question: Does the paper describe potential risks incurred by study participants, whether such risks were disclosed to the subjects, and whether Institutional Review Board (IRB) approvals (or an equivalent approval/review based on the requirements of your country or institution) were obtained?
    \item[] Answer: \answerNA{}.
    \item[] Justification: The paper does not involve human subjects, so IRB approval or equivalent review is not applicable.
    \item[] Guidelines:
    \begin{itemize}
        \item The answer \answerNA{} means that the paper does not involve crowdsourcing nor research with human subjects.
        \item Depending on the country in which research is conducted, IRB approval (or equivalent) may be required for any human subjects research. If you obtained IRB approval, you should clearly state this in the paper. 
        \item We recognize that the procedures for this may vary significantly between institutions and locations, and we expect authors to adhere to the NeurIPS Code of Ethics and the guidelines for their institution. 
        \item For initial submissions, do not include any information that would break anonymity (if applicable), such as the institution conducting the review.
    \end{itemize}

\item {\bf Declaration of LLM usage}
    \item[] Question: Does the paper describe the usage of LLMs if it is an important, original, or non-standard component of the core methods in this research? Note that if the LLM is used only for writing, editing, or formatting purposes and does \emph{not} impact the core methodology, scientific rigor, or originality of the research, declaration is not required.
    %this research? 
    \item[] Answer: \answerYes{}.
    \item[] Justification: The paper's core method is explicitly designed for RL post-training of LLMs/LRMs, and Sections~\ref{sec:method} and~\ref{subsec:main-experiment} describe how the models are used and evaluated.
    \item[] Guidelines:
    \begin{itemize}
        \item The answer \answerNA{} means that the core method development in this research does not involve LLMs as any important, original, or non-standard components.
        \item Please refer to our LLM policy in the NeurIPS handbook for what should or should not be described.
    \end{itemize}

\end{enumerate}

\end{document}